\newcommand{\email}[1]{\href{mailto:#1}{\nolinkurl{#1}}}
\renewcommand{\leq}{\ensuremath{\leqslant}}
\renewcommand{\geq}{\ensuremath{\geqslant}}
\newcommand{\beq}{\begin{equation}}
\newcommand{\eeq}{\end{equation}}
\newcommand{\prox}{\ensuremath{\text{\rm prox}}}
\newcommand{\card}{\ensuremath{\text{\rm card}}}
\newcommand{\argmin}{\ensuremath{\text{\rm argmin}\,}}
\newtheorem{theorem}{Theorem}
\newtheorem{example}[theorem]{Example}
\newcommand{\bex}{\begin{example}}
\newcommand{\eex}{\end{example}} 
\newtheorem{definition}[theorem]{Definition}
\newtheorem{lemma}[theorem]{Lemma}
\newtheorem{proposition}[theorem]{Proposition}
\newtheorem{remark}[theorem]{Remark}
\numberwithin{equation}{section}
\DeclareMathOperator{\supp}{supp}
\let\inf\relax \DeclareMathOperator*\inf{\vphantom{p}inf}
\let\sup\relax \DeclareMathOperator*\sup{\vphantom{p}sup}
\newcommand{\NN}[1]{{\mathbb{N}_{#1}}}
\newcommand{\N}{{\mathbb N}}
\newcommand{\R}{{\mathbb R}}
\newcommand{\G}{{\mathcal{G}}}
\newcommand{\bea}{\begin{eqnarray}}
\newcommand{\eea}{\end{eqnarray}}
\newcommand{\lam}{{\lambda}}
 \newcommand{\lb}{{\langle}}
\newcommand{\rb}{{\rangle}} 
\def\boldf#1{\hbox{\rlap{$#1$}\kern.4pt{$#1$}}}
 \newcommand{\trans}{^{\scriptscriptstyle
\top}}
\newcommand{\Ik}{{I_k}}
\begin{document}

\title{\LARGE Fitting Spectral Decay with the $k$-Support Norm}

\author{Andrew M. McDonald$^1$ \and Massimiliano Pontil$^{1,2}$ \and Dimitris Stamos$^{2}$ \\ \\
 {(1)} Department of Computer Science \\ 
 University College London \\
 {\em email:~\{a.mcdonald,d.stamos.12\}@ucl.ac.uk} \\
 Gower Street, London WC1E 6BT, UK \\ \\
 {(2)} Istituto Italiano di Tecnologia \\  
 Via Morego 30, 16163 Genova, Italy
}

\maketitle

\begin{abstract}
The spectral $k$-support norm enjoys good estimation properties in low rank matrix learning problems, empirically outperforming the trace norm. Its unit ball is the convex hull of rank $k$ matrices with unit Frobenius norm.
In this paper we generalize the norm to the spectral $(k,p)$-support norm, whose additional parameter $p$ can be used to tailor the norm to the decay of the spectrum of the underlying model.
We characterize the unit ball and we explicitly compute the norm.
We further provide a conditional gradient method to solve regularization problems with the norm, and we derive an efficient algorithm to compute the Euclidean projection on the unit ball in the case $p=\infty$. 
In numerical experiments, we show that allowing $p$ to vary significantly improves performance over the spectral $k$-support norm on various matrix completion benchmarks, and better captures the spectral decay of the underlying model. 
\end{abstract}

{\bfseries Keywords.} $k$-support norm, orthogonally invariant norms, matrix completion, multitask learning, proximal point algorithms.

\section{Introduction}
\label{sec:intro}
The problem of learning a sparse vector or a low rank matrix has generated much interest in recent years.
A popular approach is to use convex regularizers which encourage sparsity, and a number of these have been studied with applications including image denoising, collaborative filtering and multitask learning, see for example,
\cite{Buehlmann2011,Wainwright2014} and references therein.

Recently, the \emph{$k$-support norm} was proposed by \cite{Argyriou2012}, motivated as a tight relaxation of the set of $k$-sparse vectors of unit Euclidean norm. 
The authors argue that as a regularizer for sparse vector estimation, the norm empirically outperforms the Lasso \cite{Tibshirani1996} and Elastic Net \cite{Zou2005} penalties. 
Statistical bounds on the Gaussian width of the $k$-support norm  have been provided by \cite{Chatterjee2014}.
The $k$-support norm has also been extended to the matrix setting. 
By applying the norm to the vector of singular values of a matrix, \cite{McDonald2014a} obtain the orthogonally invariant \emph{spectral $k$-support norm}, reporting state of the art performance on matrix completion benchmarks.  

Motivated by the performance of the $k$-support norm in sparse vector and matrix learning problems, 
in this paper we study a natural generalization by considering the $\ell_p$-norms (for $p \in [1,\infty]$) in place of the Euclidean norm.  
These allow a further degree of freedom when fitting a model to the underlying data. We denote the ensuing norm the \emph{$(k,p)$-support norm}.
As we demonstrate in numerical experiments, 
using $p=2$ is not necessarily the best choice in all instances. 
By tuning the value of $p$ the model can incorporate prior information regarding the singular values. 
When prior knowledge is lacking, the parameter can be chosen by validation, hence the model can adapt to a variety of decay patterns of the singular values.  
An interesting property of the norm is that it interpolates between the $\ell_1$ norm (for $k=1$) and the $\ell_p$-norm (for $k=d$). 
It follows that varying both $k$ and $p$ the norm allows one to learn sparse vectors which exhibit different patterns of decay in the non-zero elements. 
In particular, when $p=\infty$ the norm prefers vectors which are constant.

A main goal of the paper is to study the proposed norm in matrix learning problems. 
The $(k,p)$-support norm is a symmetric gauge function hence it induces the orthogonally invariant \emph{spectral $(k,p)$-support norm}.   
This interpolates between the trace norm (for $k=1$) and the Schatten $p$-norms (for $k=d$) and its unit ball has a simple geometric interpretation as the convex hull of matrices of rank no greater than $k$ and Schatten $p$-norm no greater than one. 
This suggests that the new norm favors low rank structure and the effect of varying $p$ allows different patterns of decay in the spectrum.
In the special case of $p=\infty$, the $(k,p)$-support norm is the dual of the Ky-Fan $k$-norm \cite{Bhatia1997} and it encourages a flat spectrum when used as a regularizer.  

The main contributions of the paper are: i) we propose the $(k,p)$-support norm as an extension of the $k$-support norm and we characterize in particular the unit ball of the induced orthogonally invariant matrix norm (Section \ref{sec:kp-sup}); ii) we show that the norm can be computed efficiently and we discuss the role of the parameter $p$ (Section \ref{sec:computing-the-norm}); iii) we outline a conditional gradient method to solve the associated regularization problem for both vector and matrix problems (Section \ref{sec:optimization});
and in the special case $p=\infty$ we provide an $\mathcal{O}(d \log d)$ computation of the projection operator (Section \ref{sec:prox}); finally, iv) we present numerical experiments on matrix completion benchmarks which demonstrate that the proposed norm offers significant improvement over previous methods, and we discuss the effect of the parameter $p$ (Section \ref{sec:experiments}).
The appendix contains derivations of results which are sketched in or are omitted from the main body of the paper.


{\bf Notation.}
We use $\NN{n}$ for the set of integers from $1$ up to and including $n$. 
We let $\R^d$ be the $d$-dimensional real vector space, whose elements are denoted by lower case letters. 
For any vector $w\in \R^d$, its {\em support} is defined as $\supp(w) = \{i \in \NN{d}: w_i \neq 0\} $, and its \emph{cardinality} is defined as $\card(w) = \vert \supp(w) \vert$. 
We let $\R^{d \times m}$ be the space of $d \times m$ real matrices.
%
%
We denote the rank of a matrix as $\textrm{rank}(W)$. 
We let $\sigma(W) \in \R^r$ be the vector formed by the singular values of $W$, where $r=\min(d,m)$, and where we assume that the singular values are ordered nonincreasing, that is 
$\sigma_1(W) \geq \cdots \geq \sigma_r(W) \geq 0$.
For $p\in[1,\infty)$ the $\ell_p$-norm of a vector $w \in \R^d$ is defined as $\|w\|_p = ( \sum_{i=1}^d |w_i|^p)^{1/p}$ and $\|w\|_\infty = \max_{i=1}^d |w_i|$.
Given a norm $\Vert \cdot \Vert$ on $\R^d$ or $\R^{d \times m}$, $\Vert \cdot \Vert_*$ denotes the corresponding dual norm, defined by $\Vert u \Vert_* = \sup \{ \langle u,w \rangle : \Vert w \Vert \leq 1  \}$. 
The convex hull of a subset $S$ of a vector space is denoted $\textrm{co}(S)$.


\section{Background and Previous Work}
\label{sec:background}
\vspace{-.1truecm}
For every $k \in \N_d$, the $k$-support norm $\Vert \cdot \Vert_{(k)}$ is defined as the norm whose unit ball is given by 
\begin{align}
\text{co}\left\{ w \in \R^d: \card(w) \leq k, \Vert w \Vert_2 \leq 1 \right\},
\label{eqn:ksup-unit-ball}
\end{align}
that is, the convex hull of the set of vectors of cardinality at most $k$ and $\ell_2$-norm no greater than one \cite{Argyriou2012}. 
We readily see that for $k=1$ and $k=d$ we recover the unit ball of the $\ell_1$ and $\ell_2$-norms respectively.

The $k$-support norm of a vector $w\in \R^d$ can be expressed as an infimal convolution \cite[p.~34]{Rockafellar1970},
\begin{align}
\|w\|_{(k)}=\inf_{(v_g)}  \Bigg\{ \sum_{g \in \G_k} \Vert v_g \Vert_2 :  \sum_{g \in \G_k} v_g = w \Bigg\},
\label{eqn:GLO}
\end{align}
where $\G_k$ is the collection of all subsets of $\NN{d}$ containing at most $k$ elements and the infimum is over all vectors $v_g\in \R^d$ such that $
\supp(v_g) \subseteq g$, for $g \in \G_k$. 
Equation \eqref{eqn:GLO} highlights that the $k$-support norm is a special case of the group lasso with overlap \cite{Jacob2009-GL}, where the cardinality of the support sets is at most $k$.  
This expression suggests that when used as a regularizer, the norm encourages vectors $w$ to be a sum of a limited number of vectors with small support.  
Due to the variational form of \eqref{eqn:GLO} computing the norm is not straightforward, however \cite{Argyriou2012} note that the dual norm has a simple form, namely it is the $\ell_2$-norm of the $k$ largest components,
\begin{align}
\Vert u \Vert_{(k),*} &= \sqrt{\sum_{i=1}^k (\vert u \vert^{\downarrow}_i)^2},~~~u \in \R^d\label{eqn:ksup-dualeq},
\end{align} 
where $|u|^{\downarrow}$ is the vector obtained from $u$ by reordering its components so that they are nonincreasing in absolute value. 
Note also from equation \eqref{eqn:ksup-dualeq} that for $k=1$ and $k=d$, the dual norm is equal to the $\ell_{\infty}$-norm and $\ell_2$-norm, respectively, which agrees with our earlier observation regarding the primal norm.

A related problem which has been studied in recent years is learning a matrix from a set of linear measurements, in which the underlying matrix is assumed to have sparse spectrum (low rank). 
The trace norm, the $\ell_1$-norm of the singular values of a matrix, has been shown to perform well in this setting, see e.g.  \cite{Argyriou2008,Jaggi2010}.
%
%
Recall that a norm $\Vert\cdot \Vert$ on $\mathbb{R}^{d \times m}$ is called orthogonally invariant if 
$\Vert W \Vert= \Vert U W V \Vert$,
for any orthogonal matrices $U \in \mathbb{R}^{d \times d}$ and $V \in \mathbb{R}^{m \times m}$. 
A classical result by von Neumann establishes that a norm is orthogonally invariant if and only if it is of the form $\Vert W \Vert = g(\sigma(W))$, where $\sigma(W)$ is the vector formed by the singular values of $W$ in nonincreasing order, and $g$ is a symmetric gauge function \cite{VonNeumann1937}.
In other words, $g$ is a norm which is invariant under permutations and sign changes of the vector components, that is 
$g(w) = g(P w)= g(J w)$,
where $P$ is any permutation matrix and $J$ is diagonal with entries equal to $\pm 1$ \cite[p. 438]{Horn1991}. 

Examples of symmetric gauge functions are the $\ell_p$ norms for $p \in [1,\infty]$ and the corresponding orthogonally invariant norms are called the Schatten $p$-norms \cite[p. 441]{Horn1991}. 
In particular, those include the trace norm and Frobenius norm for $p=1$ and $p=2$ respectively.
Regularization with Schatten $p$-norms has been previously studied by \cite{Argyriou2007} and a statistical analysis has been performed by \cite{Rohde2011}.
As the set $\G_k$ includes all subsets of size $k$, expression \eqref{eqn:GLO} for the $k$-support norm reveals that is a symmetric gauge function.  
\cite{McDonald2014a} use this fact to introduce the spectral $k$-support norm for matrices, by defining $\Vert W \Vert_{(k)} = \Vert \sigma(W) \Vert_{(k)}$, for $W \in \R^{d \times m}$ and report state of the art performance on matrix completion benchmarks. 



\section{The $(k,p)$-Support Norm}
\label{sec:kp-sup}
\vspace{-.1truecm}
In this section we introduce the $(k,p)$-support norm as a natural extension of the $k$-support norm. 
This follows by applying the $\ell_p$-norm, rather than the Euclidean norm, in the infimum convolution definition of the norm.  

\begin{definition}
\label{def:k-sup-p-norm}
Let $k \in \N_d$ and $p \in [1,\infty]$. The $(k,p)$-support norm of a vector $w \in \R^d$ is defined as 
\begin{align}
\Vert w \Vert_{(k,p)} &= \inf_{(v_g)} \left\{ \sum_{g \in \G_k} \Vert v_g \Vert_p :  \sum_{g \in \G_k} v_g = w \right\}. 
\label{eqn:kp-sup-infimum-convolution}
\end{align}
where the infimum is over all vectors $v_g\in \R^d$ such that $ \supp(v_g) \subseteq g$, for $g \in \G_k$.
\end{definition}

Let us note that the norm is well defined. Indeed, positivity, homogeneity and non degeneracy are immediate. 
To prove the triangle inequality, let $w,w' \in \R^d$.  
For any $\epsilon >0$ there exist $\{ v_g \}$ and $\{ v'_{g}\}$ such that $w=\sum_{g} v_g$, $w'=\sum_{g} v'_{g}$, $\sum_{g} \Vert v_g \Vert_p  \leq \Vert w \Vert_{(k,p)} + \epsilon/2$, and $\sum_{g} \Vert v'_g \Vert_p  \leq \Vert w' \Vert_{(k,p)} + \epsilon/2$.
As $\sum_{g} v_g + \sum_{g} v'_{g} = w+w'$, we have
\begin{align*}
\Vert w+w' \Vert_{(k,p)}
&\leq \sum_{g} \Vert v_g \Vert_p + \sum_{g} \Vert v'_{g} \Vert_p  \\
&\leq \Vert w \Vert_{(k,p)} + \Vert w'\Vert_{(k,p)} + \epsilon,
\end{align*}
and the result follows by letting $\epsilon$ tend to zero. 

Note that, since a convex set is equivalent to the convex hull of its extreme points, Definition \ref{def:k-sup-p-norm} implies that the unit ball of the $(k,p)$-support norm, denoted by $C_k^p$, is given by the convex hull of the set of vectors with cardinality no greater than $k$ and $\ell_p$-norm no greater than 1, that is 
\begin{align}
C_k^p=\textrm{co}\left\{w\in \R^d:  \card(w) \leq k, \Vert w \Vert_p \leq 1 \right\}. 
\label{eq:unitball}
\end{align}

Definition \ref{def:k-sup-p-norm} gives the norm as the solution of a variational problem. 
Its explicit computation is not straightforward in the general case, however for $p=1$ the unit ball \eqref{eq:unitball} does not depend on $k$ and is always equal to the $\ell_1$ unit ball. 
Thus, the $(k,1)$-support norm is always equal to the $\ell_1$-norm, and we do not consider further this case in this section. 
Similarly, for $k=1$ we recover the $\ell_1$-norm for all values of $p$. 
For $p=\infty$, from the definition of the dual norm it is not difficult to show that $\Vert \cdot \Vert_{(k,p)} = \max \{\Vert \cdot \Vert_{\infty}, \Vert \cdot \Vert_1 / k \}$.  
We return to this in Section \ref{sec:computing-the-norm} when we describe how to compute the norm for all values of $p$.

Note further that in Equation \eqref{eqn:kp-sup-infimum-convolution}, as $p$ tends to $\infty$, the $\ell_p$-norm of each $v_g$ is increasingly dominated by the largest component of $v_g$. 
As the variational formulation tries to identify vectors $v_g$ with small aggregate $\ell_p$-norm, this suggests that higher values of $p$ encourage each $v_g$ to tend to a vector whose $k$ entries are equal.  
In this manner varying $p$ allows us adjust the degree to which the components of vector $w$ can be clustered into (possibly overlapping) groups of size $k$.

As in the case of the $k$-support norm, the dual $(k,p)$-support norm has a simple expression.  
Recall that the dual norm of a vector $u \in \R^d$ is defined by the optimization problem
\begin{align}
\label{eq:dual}
\|u\|_{(k,p),*} = \max\left\{\lb u,w\rb : \|w\|_{(k,p)} = 1 \right\}.
\end{align}

\begin{proposition}\label{prop:k-sup-p-norm-and-dual}
If $p \in (1,\infty]$ then the dual $(k,p)$-support norm is given by 
\begin{align*}
\Vert u \Vert_{(k,p),*} = \left(\sum_{i \in \Ik} |u_i|^q\right)^\frac{1}{q},~~~ u \in \R^d,
\end{align*}
where $q=p/(p-1)$ and $\Ik \subset \N_d$ is the set of indices of the $k$ largest components of $u$ in absolute value. 
Furthermore, if $p \in (1,\infty)$ and $u \in \R^d \backslash\{0\}$ then the maximum in 
\eqref{eq:dual} is attained for
\begin{equation}
w_i =
\begin{cases}
{\rm sign}(u_i)  \left(\frac{|u_i|}{\|u\|_{(k,p),*}}\right)^{\frac{1}{p-1}} & \text{if } i \in \Ik,\\
0 & \text{otherwise}.
\end{cases}\label{eqn:dual-component}
\end{equation}
If $p =\infty$ the maximum is attained for
\begin{equation*}
w_i =
\begin{cases}
{\rm sign}(u_i) & \text{if } i \in \Ik, u_i \neq 0,\\
\lambda_i \in [-1,1]  & \text{if } i \in \Ik, u_i =0 ,\\
0 & \text{otherwise}.
\end{cases}
\end{equation*}
\end{proposition}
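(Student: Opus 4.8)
The plan is to compute $\|u\|_{(k,p),*}$ directly from the definition of the dual norm together with the characterization \eqref{eq:unitball} of the primal unit ball. Since $\|u\|_{(k,p),*} = \sup\{\langle u,w\rangle : w \in C_k^p\}$ and $C_k^p$ is the convex hull of the vectors $w$ with $\card(w)\le k$ and $\|w\|_p \le 1$, the supremum of the linear functional $\langle u,\cdot\rangle$ over $C_k^p$ equals its supremum over the generating set; that is,
\begin{align*}
\|u\|_{(k,p),*} = \sup\Big\{ \langle u,w\rangle : \card(w)\le k,\ \|w\|_p \le 1 \Big\}.
\end{align*}
First I would fix a support set $g\in\G_k$ and maximize $\langle u,w\rangle$ over $w$ supported on $g$ with $\|w\|_p\le 1$; by the standard duality between $\ell_p$ and $\ell_q$ (H\"older, with equality conditions), this inner maximum equals $\big(\sum_{i\in g}|u_i|^q\big)^{1/q}$, attained at $w_i = \operatorname{sign}(u_i)(|u_i|/\|u_{g}\|_q)^{q/p} = \operatorname{sign}(u_i)(|u_i|/\|u_g\|_q)^{1/(p-1)}$ for $i\in g$. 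Then I would observe that the outer maximum over all $g\in\G_k$ is attained by taking $g=\Ik$, the indices of the $k$ largest entries of $u$ in absolute value, since $t\mapsto t^q$ is increasing on $[0,\infty)$ and hence $\sum_{i\in g}|u_i|^q$ is maximized by choosing the $k$ largest $|u_i|$. This yields the claimed formula $\|u\|_{(k,p),*} = \big(\sum_{i\in\Ik}|u_i|^q\big)^{1/q}$, and substituting $\|u_{\Ik}\|_q = \|u\|_{(k,p),*}$ into the expression for the maximizing $w$ gives \eqref{eqn:dual-component}.

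For the case $p=\infty$ (so $q=1$), the inner problem becomes: maximize $\langle u,w\rangle$ over $w$ supported on $g$ with $\|w\|_\infty\le 1$, whose value is $\sum_{i\in g}|u_i|$, attained by $w_i=\operatorname{sign}(u_i)$ when $u_i\ne 0$ and $w_i$ arbitrary in $[-1,1]$ when $u_i=0$; optimizing over $g$ again selects $g=\Ik$, giving $\|u\|_{(k,\infty),*}=\sum_{i\in\Ik}|u_i|$ (the Ky Fan $k$-norm, consistent with the remark in the introduction) with the stated maximizer. The only genuine subtlety is the claim that the maximum in \eqref{eq:dual} is \emph{attained}: one should note that the feasible set $\{w:\card(w)\le k,\|w\|_p\le 1\}$ is compact (finite union of compact sets) so $\langle u,\cdot\rangle$ attains its max there, and since this max equals $\|u\|_{(k,p),*}$ and is achieved at an extreme point of $C_k^p$, the explicit $w$ above is a maximizer over the unit ball; the constraint $\|w\|_{(k,p)}=1$ in \eqref{eq:dual} rather than $\le 1$ is harmless for $u\ne 0$ since the optimal $w$ has norm exactly $1$ (indeed $\|w\|_p=1$ and $w$ is $k$-sparse, so $\|w\|_{(k,p)}=1$).

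The step I expect to require the most care is verifying that the maximizer $w$ in \eqref{eqn:dual-component} indeed has $(k,p)$-support norm equal to $1$ (not merely $\le 1$) and genuinely achieves $\langle u,w\rangle = \|u\|_{(k,p),*}$ — this amounts to checking the H\"older equality case and the identity $\sum_{i\in\Ik}|u_i|\cdot(|u_i|/\|u_{\Ik}\|_q)^{1/(p-1)} = \|u_{\Ik}\|_q$, which follows from $1 + \tfrac{1}{p-1} = \tfrac{p}{p-1} = q$. Everything else is a routine application of finite-dimensional $\ell_p$–$\ell_q$ duality and the observation that extreme points of a convex hull suffice for maximizing linear functionals.
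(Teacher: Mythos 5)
Your proposal is correct and follows essentially the same route as the paper's proof: reduce the dual norm to a maximization over the extreme points of $C_k^p$ via \eqref{eq:unitball}, restrict to a $k$-element support, and apply H\"older's inequality in $\R^k$ with its equality conditions. The only difference is cosmetic — you make explicit the optimization over the support set $g\in\G_k$ and the verification that the maximizer has norm exactly one, steps the paper compresses into a sentence.
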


Note that 
for $p=2$ we recover the dual of the $k$-support norm in \eqref{eqn:ksup-dualeq}. 

\subsection{The Spectral $(k,p)$-Support Norm}
From Definition \ref{def:k-sup-p-norm} it is clear that the $(k,p)$-support norm is a symmetric gauge function. 
This follows since $\G_k$ contains all groups of cardinality $k$ and the $\ell_p$-norms only involve absolute values of the components.
Hence we can define the spectral $(k,p)$-support norm as 
\begin{align*}
\Vert W \Vert_{(k,p)} = \Vert \sigma(W) \Vert_{(k,p)}, ~~~ W \in \R^{d \times m}.
\end{align*}

Since the dual of any orthogonally invariant norm is given by $\|\cdot\|_* = \Vert \sigma(\cdot) \Vert_*$, see e.g. \cite{Lewis1995}, we conclude that the dual spectral $(k,p)$-support norm is given by 
\begin{align*}
\Vert Z \Vert_{(k,p),*} = \Vert \sigma(Z) \Vert_{(k,p),*}, ~~~ Z \in \R^{d \times m}.
\end{align*}

The next result characterizes the unit ball of the spectral $(k,p)$-support norm.  
Due to the relationship between an orthogonally invariant norm and its corresponding symmetric gauge function, we see that the cardinality constraint for vectors generalizes in a natural manner to the rank operator for matrices.

\begin{proposition}
\label{prop:unit-ball-of-spectral}
The unit ball of the spectral $(k,p)$-support norm is the convex hull of the set of matrices of rank at most $k$ and Schatten $p$-norm no greater than one. 
\end{proposition}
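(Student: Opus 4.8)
The plan is to establish the two set inclusions $\conv(S) \subseteq B$ and $B \subseteq \conv(S)$, where $S$ denotes the set of matrices of rank at most $k$ and Schatten $p$-norm at most one, and $B$ the unit ball of the spectral $(k,p)$-support norm. Since we work in the finite-dimensional space $\R^{d\times m}$ and $S$ is compact (the rank constraint defines a closed set and the Schatten-$p$ ball is compact, so their intersection is compact), $\conv(S)$ is already closed; hence it suffices to prove these two inclusions directly, with no need to pass to closures.

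For $\conv(S) \subseteq B$ I would argue as follows: if $W \in S$ then $\sigma(W)$ has at most $k$ nonzero components and $\ell_p$-norm at most one, so $\sigma(W) \in C_k^p$ by \eqref{eq:unitball}; therefore $\Vert W \Vert_{(k,p)} = \Vert \sigma(W) \Vert_{(k,p)} \leq 1$, i.e. $W \in B$. Convexity of $B$ then gives $\conv(S)\subseteq B$.

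For the reverse inclusion $B \subseteq \conv(S)$, take $W \in B$, so that $\sigma(W) \in C_k^p$. Using the convex-hull description \eqref{eq:unitball} together with Carathéodory's theorem, write $\sigma(W) = \sum_j \lambda_j w_j$ as a finite convex combination of vectors $w_j$ with $\card(w_j) \leq k$ and $\Vert w_j \Vert_p \leq 1$. I would then fix a single singular value decomposition $W = U \diag(\sigma(W)) V^\top$, with $U,V$ orthogonal and $\diag(\cdot)$ the (possibly rectangular) diagonal embedding, and use linearity of $\diag$ to obtain $W = \sum_j \lambda_j\, U \diag(w_j) V^\top$. Each summand $W_j := U \diag(w_j) V^\top$ has rank equal to $\card(w_j) \leq k$ and singular values $\{|(w_j)_i|\}$, hence $\Vert \sigma(W_j) \Vert_p = \Vert w_j \Vert_p \leq 1$, so $W_j \in S$; this exhibits $W$ as a convex combination of elements of $S$.

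The step requiring care is this lifting: the orthogonal factors $U,V$ must be held fixed across all terms of the convex decomposition of $\sigma(W)$ so that the pieces $W_j$ recombine to $W$, and one should observe that possible sign changes carried by the $w_j$ are harmless, since they can be absorbed into sign flips of the columns of $U$, leaving each $W_j$ genuinely a matrix of rank $\leq k$ with Schatten-$p$ norm $\leq 1$. Everything else is routine; combining the two inclusions yields $B = \conv(S)$, which is the assertion.
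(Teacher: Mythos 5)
Your proof is correct, but it follows a genuinely different route from the paper's. The paper does not argue by double inclusion: it sets $A_k^p = \conv(T_k^p)$ where $T_k^p$ is the set of matrices of rank at most $k$ and Schatten $p$-norm at most one, proves via an auxiliary lemma that the Minkowski functional of a bounded, convex, balanced, absorbing set is a norm (verifying absorbency by writing $W=\sum_i \sigma_i\, U S_i V\trans$ as a scaled convex combination of rank-one matrices), and then identifies that Minkowski functional with $\Vert\sigma(\cdot)\Vert_{(k,p)}$ by noting that $A_k^p$ is orthogonally invariant and invoking von Neumann's theorem. You instead take the norm as already well defined (which the paper has established from the symmetric gauge function property) and prove the set equality directly: the inclusion $\conv(S)\subseteq B$ is immediate from \eqref{eq:unitball} and convexity of $B$, while the reverse inclusion uses the same lifting idea the paper deploys only for absorbency --- holding one SVD $W=U\,\diag(\sigma(W))\,V^\top$ fixed and pushing a convex decomposition of $\sigma(W)$ inside $C_k^p$ through $w\mapsto U\,\diag(w)\,V^\top$, with the signs of the $w_j$ absorbed into column sign flips so that each piece genuinely lies in $S$. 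Your observation that $\conv(S)$ is already closed (compactness of $S$ plus Carath\'eodory) correctly disposes of the closed-versus-plain convex hull issue. What the paper's route buys is a self-contained construction of a norm from the candidate ball, which would be needed if one had not already known $\Vert\sigma(\cdot)\Vert_{(k,p)}$ to be a norm; what yours buys is brevity and independence from the Minkowski-functional lemma. Both arguments are sound.
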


In particular, if $p=\infty$, the dual vector norm is given by $u \in \R^d$, by $\Vert u \Vert_{(k,\infty),*} = \sum_{i=1}^k \vert u \vert^{\downarrow}_i$. Hence, for any $Z\in \R^{d \times m}$, the dual spectral norm is given by
$\Vert Z \Vert_{(k,\infty),*} = \sum_{i=1}^k  \sigma_i(Z)$, that is the sum of the $k$ largest singular values, which is also known as the Ky-Fan $k$-norm, see e.g. \cite{Bhatia1997}.


\section{Computing the Norm}
\label{sec:computing-the-norm}
\vspace{-.1truecm}
In this section we compute the norm, illustrating how it interpolates between the $\ell_1$ and $\ell_p$-norms. 

\begin{theorem}\label{thm:kp-norm}
Let $p \in (1,\infty)$. For every $w \in \mathbb{R}^d$, and $k \leq d$, it holds that
\begin{align}
\Vert w \Vert_{(k,p)} =  \Bigg[ \sum_{i=1}^\ell  (\vert w\vert^{\downarrow}_i )^p  + 
 \left( \frac{ \sum_{i=\ell+1}^d \vert w\vert^{\downarrow}_i }{\sqrt[q]{k-\ell}}\right)^p \Bigg]^{\frac{1}{p}}
\label{eq:111kp}
\end{align}
where 
$\frac{1}{p}+\frac{1}{q}=1$, 
and for $k=d$, we set $\ell=d$, otherwise $\ell$ is the largest integer in $\{0, \ldots, k-1\}$ satisfying
\begin{align}
(k-\ell)\vert w\vert^{\downarrow}_{\ell} \geq \sum_{i=\ell+1}^d \vert w\vert^{\downarrow}_{i}.
\label{eq:222kp}
\end{align}
Furthermore, the norm can be computed in $\mathcal{O}(d \log d)$ time. 
\end{theorem}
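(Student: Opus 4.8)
The plan is to compute the $(k,p)$-support norm by dualizing twice: we know from Proposition~\ref{prop:k-sup-p-norm-and-dual} that the dual norm is the $\ell_q$-norm of the $k$ largest components (in absolute value), so $\Vert w \Vert_{(k,p)} = \sup\{\langle w, u\rangle : \Vert u\Vert_{(k,p),*} \leq 1\}$, and it suffices to solve this concave maximization explicitly. Since both the objective and the constraint depend only on $|w|$ and $|u|$ through a Hardy--Littlewood--type rearrangement inequality, we may assume without loss of generality that $w = |w|^{\downarrow}$, i.e. $w_1 \geq \cdots \geq w_d \geq 0$, and look for an optimal $u$ with the same ordering and nonnegative entries. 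Thus the problem reduces to
\begin{align*}
\Vert w\Vert_{(k,p)} = \max\Bigg\{ \sum_{i=1}^d w_i u_i : u_1\geq \cdots \geq u_d \geq 0,\ \sum_{i=1}^k u_i^q \leq 1\Bigg\},
\end{align*}
where the monotonicity constraint on $u$ makes ``the $k$ largest components'' precisely $u_1,\dots,u_k$.

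The key structural observation is that in an optimal solution the tail variables $u_{k+1},\dots,u_d$ should all be pushed up to the common value $u_k$ (increasing them only increases the objective since $w_i\geq 0$, and they are unconstrained except by $u_i \leq u_k$), so we may set $u_{k+1}=\cdots=u_d = u_k =: t$. Writing $a_i := w_i$ for $i\leq k-1$ and $b := \sum_{i=k}^d w_i$, the problem becomes $\max\{\sum_{i=1}^{k-1} w_i u_i + b\,t : \sum_{i=1}^{k-1}u_i^q + t^q \leq 1,\ u_i \geq t\}$. This is a smooth convex program; I would analyze it via KKT conditions. Ignoring the inequality constraints $u_i \geq t$ first, Hölder's inequality (or a Lagrange multiplier computation) gives the unconstrained-in-ordering optimum $u_i \propto w_i^{1/(q-1)} = w_i^{p-1}$ for $i\leq k-1$ and $t \propto \big(b/(k-\ell)\big)^{\,?}$ after accounting for the $d-\ell$ tied tail terms; matching exponents $p$ and $q$ via $1/p+1/q=1$ and $q/(q-1)=p$ produces the $1/\sqrt[q]{k-\ell}$ normalization and the claimed closed form $\big[\sum_{i=1}^\ell w_i^p + \big(\sum_{i=\ell+1}^d w_i / \sqrt[q]{k-\ell}\big)^p\big]^{1/p}$. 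The role of $\ell$ is exactly to identify which of the constraints $u_i \geq t$ are active: the unconstrained optimum respects $u_\ell \geq t$ iff $w_\ell^{p-1} \geq$ (the common tail value), which after simplification is the threshold condition \eqref{eq:222kp}; for indices $i \leq \ell$ the constraint $u_i\geq t$ is slack and $u_i\propto w_i^{p-1}$, while for $i>\ell$ it binds and $u_i = t$. I would prove $\ell$ is well-defined (the set of valid $\ell$ is a prefix of $\{0,\dots,k-1\}$, decreasing in the RHS) by a short monotonicity argument, handling $k=d$ separately (no tail, $\ell=d$, recovering $\Vert w\Vert_p$) and checking $k=1$ gives $\ell=0$ and $\Vert w\Vert_1$.

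The main obstacle is the careful bookkeeping in the KKT analysis: one must verify that the candidate primal-dual pair is feasible and optimal, in particular that choosing $\ell$ by the \emph{largest} index satisfying \eqref{eq:222kp} is consistent — i.e. that this makes the multipliers for $u_i \geq t$ nonnegative for $i > \ell$ and that $u_i > t$ strictly for $i \le \ell$ — and that no reordering of $u$ or splitting of the tail beats the ``flat tail'' ansatz. I expect to reduce this to a one-variable convexity/monotonicity check once the structure $u = (w_1^{p-1},\dots,w_\ell^{p-1}, c\cdot b/(k-\ell),\dots)$ (suitably normalized) is plugged in. Finally, the $\mathcal{O}(d\log d)$ claim follows since the only nontrivial computation is sorting $|w|$ (cost $\mathcal{O}(d\log d)$), after which partial sums $\sum_{i=\ell+1}^d |w|^{\downarrow}_i$ and the test \eqref{eq:222kp} are evaluated incrementally for $\ell = k-1, k-2, \dots, 0$ in $\mathcal{O}(d)$ total, and the final formula \eqref{eq:111kp} is an $\mathcal{O}(d)$ evaluation; I would spell out this linear-time sweep briefly.
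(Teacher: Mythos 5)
Your proposal follows essentially the same route as the paper's proof: dualize via Proposition~\ref{prop:k-sup-p-norm-and-dual}, reduce to a maximization over ordered nonnegative $u$ with the tail collapsed to $u_k$, apply H\"older's inequality to the relaxed problem, and use the ordering/KKT conditions to identify the critical index $\ell$ as the largest one satisfying \eqref{eq:222kp}, with the complexity coming from sorting. The only cosmetic differences are that the paper organizes the active-set search as an iterative ``apply H\"older, check ordering, tie one more variable'' loop and justifies the choice of the largest $\ell$ by monotonicity of the optimal values $M_\ell$ (enabling binary search), whereas you phrase the same computation as a KKT analysis with a linear sweep; both are sound.
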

\begin{proof}
Note first that in \eqref{eq:111kp} when $\ell=0$ we understand the first term in the right hand side to be zero, and when $\ell=d$ we understand the second term to be zero. 

We need to compute
$$
\|w\|_{(k,p)}=\max \left\{\sum_{i=1}^d u_i w_i : \|u\|_{(k,p),*}\leq 1\right\}
$$
where the dual norm $\|\cdot\|_{(k,p),*}$ is described in Proposition \ref{prop:k-sup-p-norm-and-dual}. 
Let $z_i = \vert w\vert^{\downarrow}_i$. The problem is then equivalent to 
\begin{align}
\max \left\{\sum_{i=1}^d z_i u_i : \sum_{i=1}^k u_i^q \leq 1, u_1\geq \cdots \geq u_d \right\}.
\label{eq:333kp}
\end{align}
This further simplifies to the $k$-dimensional problem
\begin{align*}
\max \left\{\sum_{i=1}^{k-1} u_i z_i + u_k \sum_{i=k}^{d} z_i: \sum_{i=1}^k u_i^q \leq 1, u_1\geq \cdots \geq u_k \right\}.
\end{align*}
Note that when $k=d$, the solution is given by the dual of the $\ell_q$-norm, that is the $\ell_p$-norm.  
For the remainder of the proof we assume that $k<d$.
We can now attempt to use Holder's inequality, which states that for all vectors $x$ such that $\|x\|_q=1$,
$\lb x,y\rb \leq \|y\|_p$, and the inequality is tight if and only if
\begin{align*}
x_i = \left(\frac{|y_i|}{\|y\|_p}\right)^{p-1}{\rm sign}(y_i). 
\end{align*}
We use it for the vector $y = (z_1,\dots,z_{k-1},\sum_{i=k}^{d} z_i)$. 
The components of the maximizer $u$ satisfy $u_i = \left(\frac{z_i}{M_{k-1}}\right)^{p-1}$ if $i\leq k-1$, and 
\begin{align*}
u_{k} =  \left(\frac{\sum_{i=\ell+1}^d z_i}{M_{k-1}}\right)^{p-1}.
\end{align*}
where for every $\ell \in \{0,\dots,k-1\}$, $M_\ell$ denotes the r.h.s. in equation \eqref{eq:111kp}.
We then need to verify that the ordering constraints are satisfied. 
This requires that
\begin{align*}
(z_{k-1})^{p-1} \geq \left(\sum_{i=k}^{d} z_i\right)^{p-1}
\end{align*}
which is equivalent to inequality \eqref{eq:222kp} for $\ell=k-1$. 
If this inequality is true we are done, otherwise we set $u_{k} = u_{k-1}$ and solve the smaller problem
\begin{eqnarray}
\nonumber
\max \bigg\{\sum_{i=1}^{k-2} u_i z_i + u_{k-1}\hspace{-.1truecm} \sum_{i=k-1}^{d} \hspace{-.1truecm} z_i~:~~~~~~~~~~~~~~~~~~~~~~~~ \\
~~~~~~~~~~\sum_{i=1}^{k-2} u_i^q + 2 u_{k-1}^q \leq 1,~~~ u_1\geq \cdots \geq u_{k-1} \bigg\}.
\nonumber
\end{eqnarray}
We use again H\"older's inequality and keep the result if the ordering constraints are fulfilled. 
Continuing in this way, the generic problem we need to solve is
\begin{eqnarray}
\nonumber
\max \bigg\{\sum_{i=1}^{\ell} u_i z_i + u_{\ell+1} \sum_{i=\ell+1}^{d} z_i~:~~~~~~~~~~~~~~~~~~~~~~~~~~~\\ 
~~~~~~~~~ \sum_{i=1}^{\ell} u_i^q + (k-\ell) u_{\ell+1}^q \leq 1,~~~ u_1\geq \cdots \geq u_{\ell+1} \bigg\}
\nonumber
\end{eqnarray}
where $\ell \in \{0,\dots,k-1\}$.
Without the ordering constraints the maximum, $M_\ell$, 
is obtained by the change of variable 
$u_{\ell+1} \mapsto (k-\ell)^{\frac{1}{q}} u_{\ell}$
followed by applying H\"older's inequality. 
A direct computation provides that the maximizer is $u_i = \left(\frac{z_i}{M_\ell}\right)^{p-1}$ if $i \leq \ell$, and 
\begin{align*}
(k-\ell)^\frac{1}{q}u_{\ell+1} =  \left(\frac{\sum_{i=\ell+1}^d z_i}{(k-\ell)^\frac{1}{q} M_\ell^p}\right)^{p-1}.
\end{align*}
Using the relationship $\frac{1}{p}+\frac{1}{q}=1$, we can rewrite this as
\begin{align*}
u_{\ell+1} = \left(\frac{\sum_{i=\ell+1}^d z_i}{(k-\ell) M_\ell^p}\right)^{p-1}.
\end{align*}
Hence, the ordering constraints are satisfied if 
\begin{align*}
z_\ell^{p-1} \geq \left(\frac{\sum_{i=\ell+1}^d z_i}{(k-\ell)}\right)^{p-1},
\end{align*} 
which is equivalent to \eqref{eq:222kp}.
Finally note that 
$M_\ell$ is a nondecreasing function of $\ell$. 
This is because the problem with a smaller value of $\ell$ is more constrained, namely, it solves \eqref{eq:333kp} with the additional constraints $u_{\ell+1} = \cdots = u_d$. 
Moreover, if the constraint \eqref{eq:222kp} holds for some value $\ell \in \{0,\dots,k-1\}$ then it also holds for a smaller value of $\ell$, hence we maximize the objective by choosing the largest $\ell$. 

The computational complexity stems from using the monotonicity of $M_{\ell}$ with respect to $\ell$, which allows us to identify the critical value of $\ell$ using binary search. 
\end{proof}

Note that for $k=d$ we recover the $\ell_p$-norm and for $p =2$ we recover the result in \cite{Argyriou2012,McDonald2014a}, however our proof technique is different from theirs.

\begin{remark}[Computation of the norm for $p \in \{1,\infty\}$] 
Since the norm $\|\cdot\|_{(k,p)}$ computed above for $p \in (1,\infty)$ is continuous in $p$, the special cases $p=1$ and $p=\infty$ can be derived by a limiting argument. We readily see that for $p=1$ the norm does not depend on $k$ and it is always equal to the $\ell_1$-norm, in agreement with our observation in the previous section. 
For $p = \infty$ we obtain that
$\Vert w \Vert_{(k,\infty)} = \max \left(\|w\|_\infty, {\|w\|_1}/{k}\right)$.
\end{remark}

\section{Optimization}
\label{sec:optimization}
\vspace{-.1truecm}
In this section, we describe how to solve regularization problems using the vector and matrix $(k,p)$-support norms. We consider the constrained optimization problem 
\begin{align}
\min \left\{ f(w) : \Vert w \Vert_{(k,p)} \leq \alpha \right\} \label{eqn:ivanov},
\end{align}
where $w$ is in $\R^d$ or $\R^{d \times m}$, $\alpha >0$ is a regularization parameter and the error function $f$ is assumed to be convex and continuously differentiable. 
For example, in linear regression a valid choice is the square error, $f(w) = \|Xw-y\|_2^2$, where $X$ is matrix of observations and $y$ a vector of response variables. 
Constrained problems of form \eqref{eqn:ivanov} are also referred to as Ivanov regularization in the inverse problems literature \cite{Ivanov1978}. 

A convenient tool to solve problem \eqref{eqn:ivanov} is provided by the \emph{Frank-Wolfe} method \cite{Frank1956}, see also \cite{Jaggi2013} for a recent account. 
\begin{algorithm}[t]
\caption{Frank-Wolfe. \label{alg:FW}}
\begin{algorithmic} 
\STATE Choose $w^{(0)}$ such that $\Vert w^{(0)} \Vert_{(k,p)} \leq \alpha$
\FOR{$t=0, \ldots, T$ } \STATE{
Compute $g:= \nabla f(w^{(t)})$ \\
Compute $s:= \argmin \left\{ \langle s, g)\rangle: \Vert s \Vert_{(k,p)} \leq \alpha \right\}$  \\ 
Update $w^{(t+1)} := (1-\gamma) w^{(t)} + \gamma s$, for $\gamma := \frac{2}{t+2}$}
\ENDFOR
\end{algorithmic}
\end{algorithm}
The method is outlined in Algorithm \ref{alg:FW}, and it has worst case convergence rate $\mathcal{O}(1/T)$.
The key step of the algorithm is to solve the subproblem
\beq
\argmin \left\{ \langle s, g\rangle:\Vert s \Vert_{(k,p)} \leq \alpha \right\},
\label{eq:keystep}
\eeq
where $g=\nabla f(w^{(t)})$, that is the gradient of the objective function at the $t$-th iteration. This problem involves computing a subgradient of the dual norm at $g$. 
It can be solved exactly and efficiently as a consequence of Proposition \ref{prop:k-sup-p-norm-and-dual}. 
We discuss here the vector case and postpone the discussion of the matrix case to Section \ref{sec:MP}. 
By symmetry of the $\ell_p$-norm, problem \eqref{eq:keystep} can be solved in the same manner as the maximum in Proposition \ref{prop:k-sup-p-norm-and-dual}, and the solution is given by $s_i = -\alpha w_i$, where $w_i$ is given by \eqref{eqn:dual-component}. 
Specifically, letting $\Ik \subset \NN{d}$ be the set of indices of the $k$ largest components of $g$ in absolute value, for $p \in (1,\infty)$ we have 
\begin{equation}
\label{eq:FWp}
s_i = 
\begin{cases}
- \alpha \, {\rm sign}(g_i) \left(\frac{g_i}{\|g\|_{(k,p),*}}\right)^{\frac{1}{p-1}}, \quad & \text{if } i \in \Ik \\
0, & \text{if } i \notin \Ik
\end{cases}
\end{equation}
and, for $p = \infty$ we choose the subgradient
\begin{equation}
\label{eq:FWinf}
s_i =
\begin{cases}
- \alpha \, {\rm sign}(g_i) & \text{if } i \in \Ik,~g_i \neq 0,\\
0 & \text{otherwise}.
\end{cases}
\end{equation}

\subsection{Projection Operator}
\label{sec:prox}
An alternative method to solve \eqref{eqn:ivanov} in the vector case is to consider the equivalent problem
\begin{align}
\min \left\{ f(w) +\delta_{ \{ \Vert \cdot \Vert_{(k,p)} \leq \alpha \} } (w): w \in \R^d \right\} \label{eqn:tikhonov},
\end{align}
where $\delta_C(\cdot)$ is the indicator function of convex set $C$. 
Proximal gradient methods can be used to solve optimization problems of the form $\min \left\{ f(w) + \lambda g(w): w \in \R^d\right\}$, where $f$ is a convex loss function with Lipschitz continuous gradient, $\lambda >0$ is a regularization parameter, and $g$ is a convex function for which the proximity operator can be computed efficiently see \cite{Beck2009,Nesterov2007} and references therein. 
The proximity operator of $g$ with parameter $\rho>0$ is defined as $\prox_{\rho g} (w) = \argmin \{\frac{1}{2} \Vert x-w\Vert^2 + \rho g(x) : x \in {\mathbb R}^d \}$. The proximity operator for the squared $k$-support norm was computed by \cite{Argyriou2012} and \cite{McDonald2014a}, and for the $k$-support norm by \cite{Chatterjee2014}. 

In the special case that $g(w) = \delta_C(w)$, where $C$ is a convex set, the proximity operator reduces to the projection operator onto $C$.  
For the $(k,p)$-support norm, for the case $p=\infty$ we can compute the projection onto its unit ball using the following result.

\begin{proposition}
\label{prop:projection-to-k-infinity-unit-ball}
For every $w\in \R^d$, the projection $x$ of $w$ onto the unit ball of the $(k,\infty$)-norm is given by 
\begin{align}
x_i = 
\begin{cases}
{\rm sign}(w_i) (\vert w_i \vert -\beta),	 &\text{if } \vert  \vert  w_i \vert -\beta \vert \leq 1,\\
{\rm sign}(w_i), 		 &\text{if } \vert  \vert  w_i \vert - \beta \vert  > 1,
\end{cases}\label{eqn:pinf-prox}
\end{align}
where $\beta=0$ if $\Vert w \Vert_1 \leq k$, otherwise $\beta \in (0,\infty)$ is chosen such that $\sum_{i=1}^d \vert x_i \vert = k$.
Furthermore, the projection can be computed in $\mathcal{O}(d \log d)$ time.
\end{proposition}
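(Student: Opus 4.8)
The plan is to set up the projection as the minimization $\min\{\frac12\|x-w\|_2^2 : \|x\|_{(k,\infty)}\le 1\}$ and exploit the characterization $\|x\|_{(k,\infty)} = \max(\|x\|_\infty, \|x\|_1/k)$ obtained in the Remark following Theorem \ref{thm:kp-norm}. Thus the feasible set is the intersection of the $\ell_\infty$ ball of radius $1$ with the $\ell_1$ ball of radius $k$. First I would observe that the objective is separable and invariant under sign changes and permutations of the coordinates, so without loss of generality $w_i\ge 0$ for all $i$ and the optimal $x$ satisfies $x_i\ge 0$ and $x_i\le w_i$; in particular $\mathrm{sign}(x_i)=\mathrm{sign}(w_i)$, which already explains the $\mathrm{sign}(w_i)$ factors in \eqref{eqn:pinf-prox}.

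Next I would split into the two cases. If $\|w\|_1\le k$, then $w$ already lies in the $\ell_1$ ball; projecting onto the $\ell_\infty$ ball alone gives the clipping $x_i=\mathrm{sign}(w_i)\min(|w_i|,1)$, and one checks this stays in the $\ell_1$ ball since clipping only decreases the $\ell_1$ norm. This is the $\beta=0$ branch of the formula. If $\|w\|_1>k$, I would introduce a Lagrange multiplier (or use KKT directly) for the constraint $\|x\|_1\le k$, which by complementary slackness must be active, i.e. $\sum_i|x_i|=k$. The Lagrangian $\frac12\|x-w\|_2^2+\beta(\|x\|_1-k)$ together with the box constraint $|x_i|\le 1$ decouples across coordinates; minimizing $\frac12(x_i-|w_i|)^2+\beta x_i$ over $x_i\in[0,1]$ (using $w_i\ge0$) yields the clipped soft-thresholding $x_i=\min(1,(|w_i|-\beta)_+)$, which is exactly \eqref{eqn:pinf-prox} once one notes that $||w_i|-\beta|\le 1$ with $|w_i|-\beta\ge 0$ corresponds to $(|w_i|-\beta)_+\le 1$ and the other branch to saturation at $1$. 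The multiplier $\beta>0$ is then pinned down by the scalar equation $\sum_i|x_i|=k$.

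The remaining points are that such a $\beta$ exists and is essentially unique, and that the whole thing is computable in $\mathcal{O}(d\log d)$. For existence/uniqueness I would note that $\beta\mapsto \sum_i \min(1,(|w_i|-\beta)_+)$ is continuous and nonincreasing in $\beta$, equals $\sum_i\min(1,|w_i|)$ at $\beta=0$ (which exceeds $k$ precisely when... well, one has to be slightly careful: when $\|w\|_1>k$ but $\sum_i\min(1,|w_i|)\le k$ the optimum is again pure clipping with $\beta=0$; otherwise the sum at $\beta=0$ exceeds $k$), and tends to $0$ as $\beta\to\infty$, so by the intermediate value theorem there is a $\beta$ achieving the value $k$. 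For the complexity, after sorting $|w|$ in $\mathcal{O}(d\log d)$ time the function $\beta\mapsto\sum_i\min(1,(|w_i|-\beta)_+)$ is piecewise linear with breakpoints at the values $|w_i|$ and $|w_i|-1$; one sorts these $2d$ breakpoints and locates by binary search the interval containing the solution, then solves the resulting linear equation in closed form.

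The main obstacle I anticipate is the bookkeeping in the case $\|w\|_1>k$: one must correctly reconcile the two constraints, since it is not automatic that the $\ell_1$ constraint is the only active one — some coordinates may simultaneously be at the $\ell_\infty$ boundary. The clean way around this is to argue directly from the KKT conditions for the single convex program with both constraints present, rather than trying to project sequentially onto the two balls; the coordinatewise minimization then transparently produces the clipped-soft-threshold form, and the two constraints' multipliers ($\beta$ for $\ell_1$, and an implicit one absorbed into the clip at $1$ for $\ell_\infty$) are handled uniformly. A secondary subtlety is the degenerate sub-case flagged above, where $\|w\|_1>k$ yet $\sum_i\min(1,|w_i|)\le k$ so that $\beta=0$ after all; I would either fold this into the statement's first branch by reading "$\|w\|_1\le k$" charitably, or remark that the formula with $\beta=0$ remains correct there since the clip-only solution is then feasible and optimal.
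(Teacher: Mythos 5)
Your proof is correct and follows essentially the same route as the paper's: formulate the projection as a quadratic program over the intersection of the $\ell_\infty$ ball of radius $1$ and the $\ell_1$ ball of radius $k$, minimize the Lagrangian componentwise to obtain clipped soft-thresholding, and pin down $\beta$ via monotonicity of $\beta\mapsto\sum_i|x_i(\beta)|$ and sorting/binary search for the $\mathcal{O}(d\log d)$ complexity. The degenerate sub-case you flag (e.g.\ $w=(3,0,0)$, $k=2$, where $\|w\|_1>k$ yet $\sum_i\min(1,|w_i|)\le k$, so no $\beta>0$ makes the $\ell_1$ constraint tight) is a genuine oversight in the paper's own proof, which asserts that the constraint $\sum_i|x_i|\le k$ is tight at the optimum whenever $\|w\|_1>k$; your resolution --- that $\beta=0$ and pure clipping remain feasible and optimal there --- is the correct fix.
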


\begin{proof}(Sketch)
We solve the optimization problem
\begin{align}
\min_{x \in \R^d}  \left\{   \sum_{i=1}^d (x_i-w_i)^2: \vert x_i \vert \leq 1, \sum_{i=1}^d \vert x_i \vert \leq k     \right\}.\label{eqn:projection-problem-body}
\end{align}
We consider two cases.  If $\sum_{i=1}^d \vert w_i \vert \leq k$, then the problem decouples and we solve it componentwise.  
If $\sum_{i=1}^d \vert w_i \vert > k$, we solve problem \eqref{eqn:projection-problem-body} by minimizing the Lagrangian function $\mathcal{L}(x,\beta) = \sum_{i=1}^d (x_i-w_i)^2 +2 \beta ( \sum_{i=1}^d \vert x_i \vert - k)$ with nonnegative multiplier $\beta$. 
This can be done componentwise, and at the optimum the constraint $\sum_{i=1}^d  \vert x_i \vert \leq  k$ will be tight. 
Finally, both cases can be combined into the form of \eqref{eqn:pinf-prox}.
The complexity follows by taking advantage of the monotonicity of $x_i(\beta)$. 
\end{proof}
We can use Proposition \ref{prop:projection-to-k-infinity-unit-ball} to project onto the unit ball of radius $\alpha>0$ by a rescaling argument (see the appendix for details).

\subsection{Matrix Problem}
\label{sec:MP}
Given data matrix $X \in \R^{d \times m}$ for which we observe a subset of entries, 
we consider the constrained optimization problem
%
\begin{align}
\min_{W \in \R^{d \times m}} &\,  
\left\{ \Vert \Omega(X) - \Omega(W) \Vert_{\rm F}: \Vert W \Vert_{(k,p)} \leq \alpha \right\}
\label{eqn:matrix-optimization}
\end{align}
where the operator $\Omega$ applied to a matrix sets unobserved values to zero.
As in the vector case, the Frank-Wolfe method can be applied to the matrix problems. 
Algorithm \ref{alg:FW} is particularly convenient in this case as we only need to compute the largest $k$ singular values, which can result in a computationally efficient algorithm. 
The result is a direct consequence of Proposition \ref{prop:k-sup-p-norm-and-dual} and von Neumann's trace inequality, see e.g. \cite[Ch. 9 Sec. H.1.h]{Marshall1979}. 
We obtain that the solution of the inner minimization step is
$
U_k {\rm diag}(s) V_k\trans
$
where $U_k$ and $V_k$ are the top $k$ left and right singular vectors of the gradient $G$ of the objective function in \eqref{eqn:matrix-optimization} evaluated at the current solution, whose singular values we denote by $g$, and $s$ is obtained from $g$ as per equations \eqref{eq:FWp} and \eqref{eq:FWinf}, for $p\in (1,\infty)$ and $p=\infty$, respectively.

Note also that the proximity operator of the norm and the Euclidean projection on the associated unit ball both require the full singular value decomposition to be performed. Indeed, the proximity operator of an orthogonally invariant norm $\Vert \cdot \Vert = g( \sigma(\cdot))$ at $W \in \mathbb{R}^{d \times m}$ is given by $
\prox_{\Vert \cdot \Vert} (W)= U \text{diag}(\prox_{g}(\sigma(W))) V\trans$, where $U$ and $V$ are the matrices formed by the left and right singular vectors of $W$, see e.g. \cite[Prop. 3.1]{Argyriou2011}, and this requires the full decomposition.

\begin{table}[t!]
\caption{Matrix completion on rank 5 matrix with flat spectrum. The improvement of the $(k,p)$-support norm over the $k$-support and trace norms is considerable (statistically significant at a level $<0.001$). }
\vskip 0.1in
\label{table:synthetic-flat-p-infinity}
\centering
\setlength\tabcolsep{4pt}
\begin{minipage}[t]{0.98\linewidth}
\centering
\begin{small}
\begin{tabular}{llccc}
\toprule
   dataset & norm  & test error                           & $k$ & $p$  \\
\midrule
rank 5       & trace  & 0.8415 (0.03)                         & -   &-  \\ 
$\rho$=10\%  & k-supp  & 0.8343 (0.03)                          & 3.3 & - \\ 
             & kp-supp  & 0.8108 (0.05)                          & 5.0 & $\infty$ \\ 
\midrule
rank 5       & trace  & 0.6161 (0.03)                          &  -  & - \\ 
$\rho$=15\%  & k-supp  & 0.6129 (0.03)                          & 3.3 & - \\ 
             & kp-supp  & 0.4262 (0.04)                          & 5.0 & $\infty$ \\ 
\midrule
rank 5       & trace  & 0.4453 (0.03)             	            &-  & -   \\ 
$\rho$=20\%  & k-supp  & 0.4436 (0.02)                          & 3.5 & - \\ 
             & kp-supp  & 0.2425 (0.02)            	            & 5.0 & $\infty$  \\ 
\midrule
rank 5       & trace  & 0.1968 (0.01)             	            &-  & -   \\ 
$\rho$=30\%  & k-supp  & 0.1838 (0.01)                          & 5.0 & - \\ 
             & kp-supp  & 0.0856 (0.01)            	            & 5.0 & $\infty$  \\ 
\bottomrule
\end{tabular}
\end{small}
\end{minipage}
\end{table}

\section{Numerical Experiments}
\label{sec:experiments}
\vspace{-.1truecm}
In this section we apply the spectral $(k,p)$-support norm to matrix completion (collaborative filtering),   
in which we want to recover a low rank, or approximately low rank, matrix from a small sample of its entries, see e.g. \cite{Jaggi2010}.
One prominent method of solving this problem is trace norm regularization: we look for a matrix which closely fits the observed entries and has a small trace norm (sum of singular values) \cite{Jaggi2010,Mazumder2010,Toh2011}. 
We apply the $(k,p)$-support norm to this framework and we investigate the impact of varying $p$.  
Next we compare the spectral $(k,p)$-support norm to the trace norm and the spectral $k$-support norm ($p=2$) in both synthetic and real datasets.
In each case we solve the optimization problem \eqref{eqn:matrix-optimization} using the Frank-Wolfe method as outlined in Section \ref{sec:optimization}.
We determine the values of $k$ and $p \geq 1$ by validation, averaged over a number of trials. Specifically, we choose the optimal $p$, $k$, as well as the regularization parameter $\alpha$ by validation over a grid.
We let alpha vary in $10^0$ to $10^5$ with step $10^{0.25}$, we let $p$ vary over $20$ values from $1$ to $50,000$, plus $p=\infty$, and vary $k$ from $1$ to $20$. Our code is available from {\em http://www0.cs.ucl.ac.uk/staff/M.Pontil/software.html}.

\begin{figure}[t]
\vskip 0.2in
\begin{center}
\centerline{
\includegraphics[width=0.65\columnwidth]{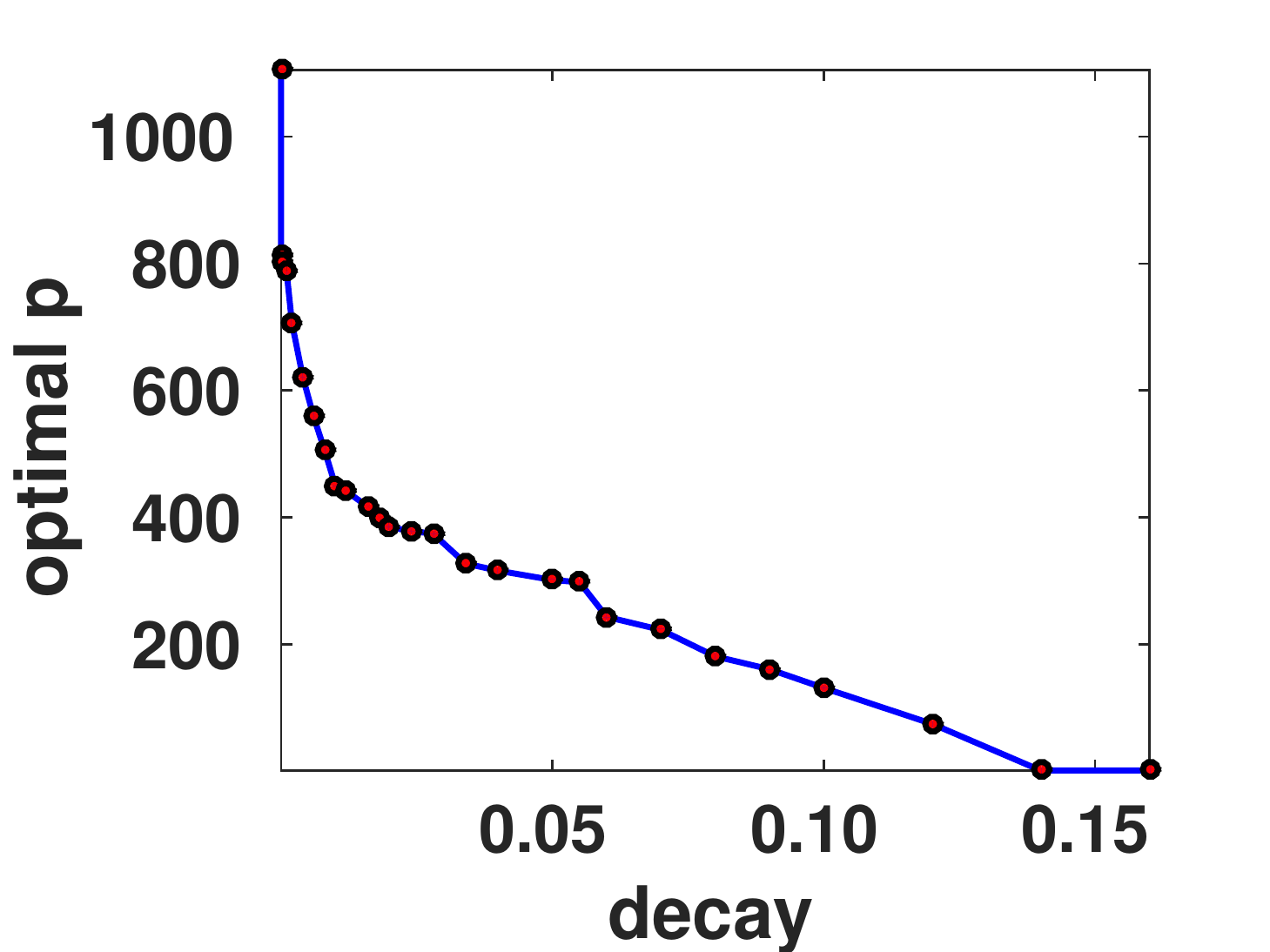}
}
\caption{Optimal $p$ vs. decay $a$.}
\label{fig:p-vs-decay-synthetic}
\end{center}
\vskip -0.2in
\end{figure}

\begin{figure}[t]
\vskip 0.2in
\begin{center}
\centerline{
\includegraphics[width=0.65\columnwidth]{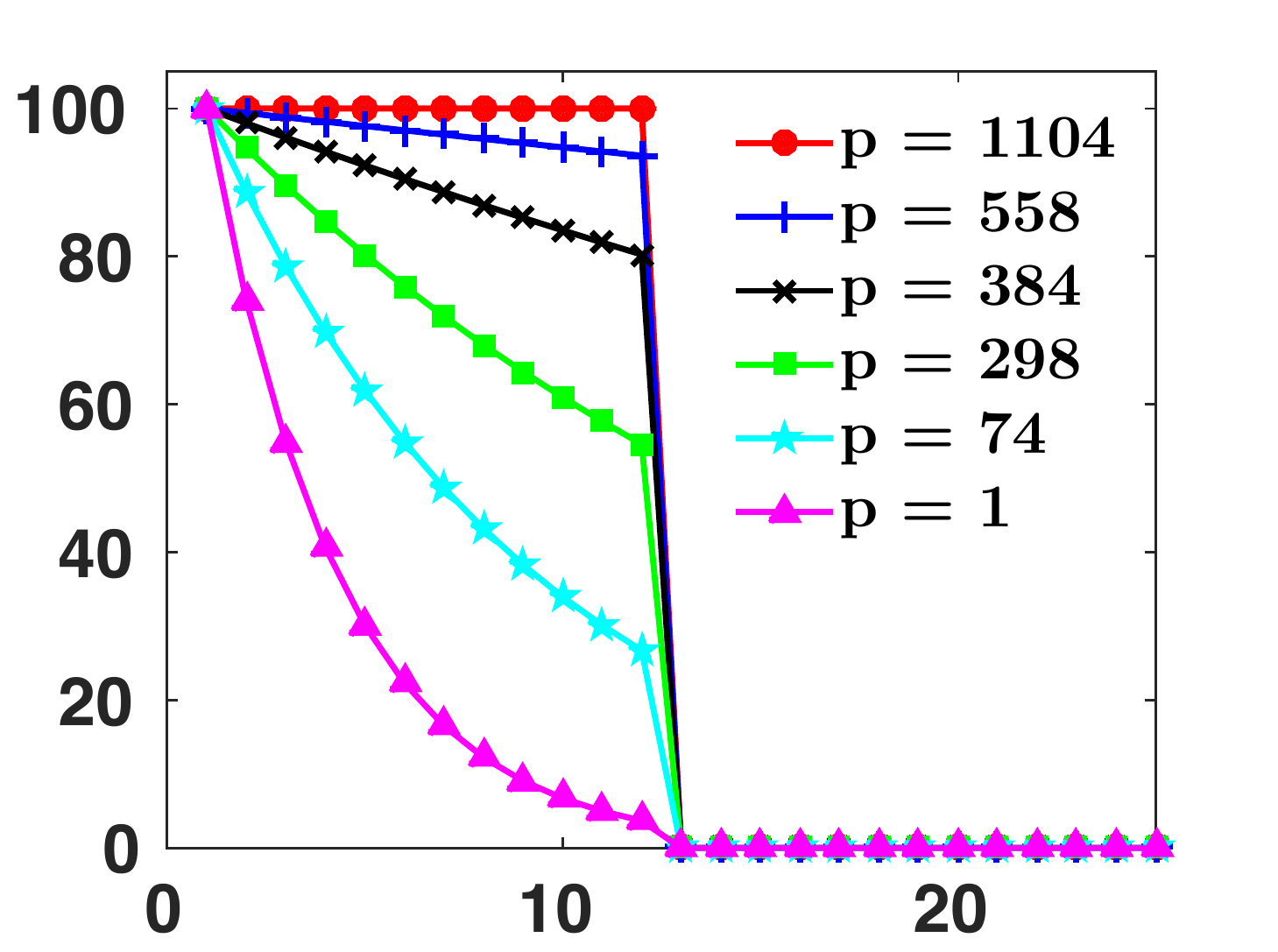}
}
\caption{Optimal $p$ fitted to Matrix spectra with various decays.}
\label{fig:p-vs-decay-real}
\end{center}
\vskip -0.2in
\end{figure}

\noindent {\bf Impact of $p$.}
A key motivation for the additional parameter $p$ is that it allows us to tune the norm to the decay of the singular values of the underlying matrix. 
In particular the variational formulation of \eqref{eqn:kp-sup-infimum-convolution} suggests that as the spectrum of the true low rank matrix flattens out, larger values of $p$ should be preferred. 
  
We ran the method on a set of $100 \times 100$ matrices of rank 12, with decay of the non zero singular values $\sigma_\ell$ proportional to $\exp(-\ell a)$, 
for 26 values of $a$ between $10^{-6}$ and $0.18$, and we determined the corresponding optimal value of $p$.
Figure \ref{fig:p-vs-decay-synthetic} illustrates the optimum value of $p$ as a function of $a$. 
We clearly observe the negative slope, that is the steeper the slope the smaller the optimal value of $p$.  
Figure \ref{fig:p-vs-decay-real} shows the spectrum and the optimal $p$ for several decay values. 
 
Note that $k$ is never equal to 1, which is a special case in which the norm is independent of $p$, and is equal to the trace norm. 
In each case the improvement of the spectral $(k,p)$-support norm over the $k$-support and trace norms is statistically significant at a level $<0.001$. 

Figure \ref{fig:error-vs-p-synthetic} illustrates the impact of the curvature $p$ on the test error on synthetic and real datasets.  
We observe that the error levels off as $p$ tends to infinity, so for these specific datasets the major gain is to be had for small values of $p$.  
The optimum value of $p$ for both the real and synthetic datasets is statistically different from $p=2$ ($k$-support norm), and $p=1$ (trace norm).

\begin{table}[t!]
\caption{
Matrix completion on real datasets. The improvement of the $(k,p)$-support norm over the $k$-support and trace norms is statistically significant at a level $<0.001$. }
\vskip 0.1in
\label{table:real-data}
\centering
\setlength\tabcolsep{4pt}
\begin{minipage}[th]{0.65\linewidth}
\centering
\begin{small}
\begin{tabular}{llccc}
\toprule
   dataset & norm  & test error                         & $k$ & $p$  \\
\midrule
MovieLens       & trace  & 0.2017                       & -   & -  \\ 
100k  		& k-supp  & 0.1990                          & 1.9  & - \\ 
             & kp-supp  & 0.1921                           & 2.0 & $\infty$ \\ 
\midrule
Jester 1      & trace  & 0.1752             	            &-  & -   \\ 
  			& k-supp  & 0.1739             	            & 6.4 & -   \\ 
			& kp-supp & 0.1744 & 2.0 & $\infty$  \\
             & kp-supp  & 0.1731             	            & 2.0 & 6.5  \\ 
\midrule
Jester 3      & trace  & 0.1959              	            &  -  & - \\ 
  			& k-supp  & 0.1942                          &  2.1 & - \\ 
             & kp-supp  & 0.1841                          & 2.0 & $\infty$ \\ 
\bottomrule
\end{tabular}
\end{small}
\end{minipage}
\end{table}

\noindent {\bf Simulated Data.}
Next we compared the performance of the $(k,p)$-support norm to that of the $k$-support norm and the trace norm for a matrix with flat spectrum.  As outlined above, as the spectrum of the true low rank matrix flattens out, larger values of $p$ should be preferred. 
Each $100 \times 100$ matrix is generated as $W = A S B\trans + E$, where $U$ and $V$ are the singular vectors of the matrix $U V\trans$, where $U,V \in \mathbb{R}^{100\times 5}$, the entries of $U$, $V$ and $E$ are i.i.d. standard Gaussian, and $S$ is diagonal with 5 non zero constant entries. 
Table \ref{table:synthetic-flat-p-infinity} illustrates the performance of the norms on a synthetic dataset of rank 5, with identical singular values, that is a flat spectrum.  
In each regime the case $p=\infty$ outperforms the other norms by a substantial margin, with statistical significance at a level $<0.001$. 
We followed the framework of \cite{McDonald2014a}  and use $\rho$ to denote the percentage of the data to use in the training set. 
We further replicated the setting of \cite{McDonald2014a} for synthetic matrix completion, and found that the $(k,p)$-support norm outperformed the standard $k$-support norm, as well as the trace norm, at a statistically significant level (see Table \ref{table:synthetic-mc} in the appendix for details).

We note that although Frank-Wolfe method for the $(k,p)$-support norm does not generally converge as quickly as proximal methods (which are available in the case of $k$-support norm \cite{McDonald2014,McDonald2014a,Chatterjee2014}), the computational cost can be mitigated using the continuation method.  Specifically given an ordered sequence of parameter values for $p$ we can proceed sequentially, initializing its value based on the previously computed value.  Empirically we tried this approach for a series of 30 values of $p$ and found that the total computation time increased only moderately.

\noindent {\bf Real Data.} Finally, we applied the norms to real collaborative filtering datasets.
We observe a subset of the (user, rating) entries of a matrix and predict the unobserved ratings, with the assumption that the true matrix is likely to have low rank. 
We report on the MovieLens 100k dataset ({\em http://grouplens.org/datasets/movielens/}), which consists
of ratings of movies, and
the Jester 1 and 3 datasets ({\em http://goldberg.berkeley.edu/jester-data/}), which consist of 
ratings of jokes. 
We followed the experimental protocol in \cite{McDonald2014a,Toh2011},
using normalized mean absolute error \cite{Toh2011}, and we implemented a final thresholding step as in \cite{McDonald2014a} (see the appendix for further details).
The results are outlined in Table \ref{table:real-data}.  
The spectral $(k,p)$-support outperformed the trace norm and the spectral $k$-support norm, and the improvement is statistically significant at a level $<0.001$ (the standard deviations, not shown here, are of the order of $10^{-5}$). 
In summary the experiments suggest that the additional flexibility of the $p$ parameter does allow the model to better fit both the sparsity and the decay of the true spectrum.

\begin{figure}[t]
\begin{center}
\centerline{
\includegraphics[width=0.65\columnwidth]{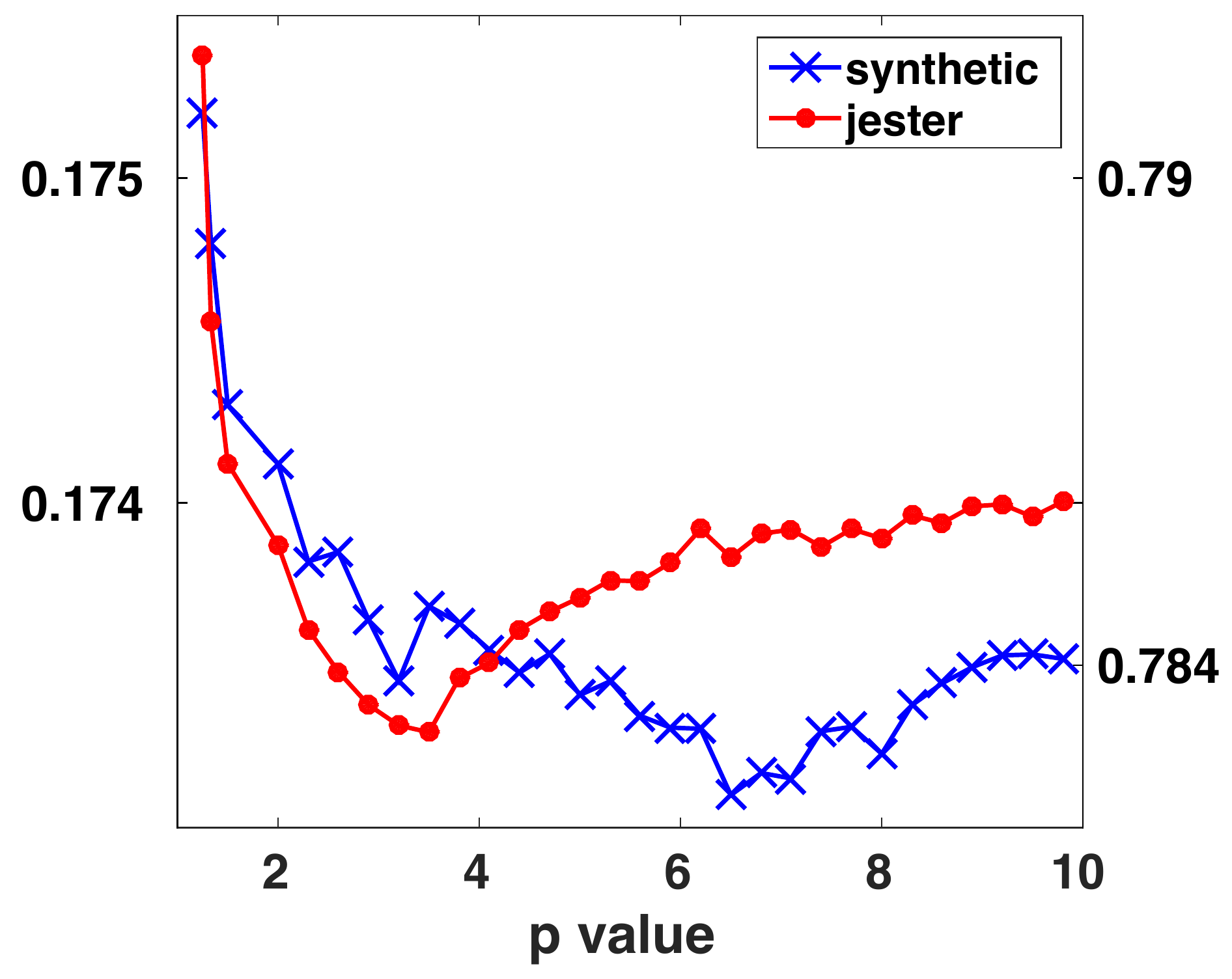}
}
\caption{Test error vs curvature ($p$).  Left axis: synthetic data (blue crosses); right axis: Jester 1 dataset (red circles).}
\label{fig:error-vs-p-synthetic}
\end{center}
\end{figure}

\section{Conclusion}
\label{sec:conclusion}
\vspace{-.1truecm}
We presented a generalization of the $k$-support norm, the $(k,p)$-support norm, where the additional parameter $p$ is used to better fit the decay of the components of the underlying model. We determined the dual norm, characterized the unit ball and computed an explicit expression for the norm.  
As the norm is a symmetric gauge function, we further described the induced spectral $(k,p)$-support norm. 
We adapted the Frank-Wolfe method to solve regularization problems with the norm, and in the particular case $p=\infty$ we provided a fast computation for the projection operator.
In numerical experiments we considered synthetic and real matrix completion problems and we showed that varying $p$ leads to significant  performance improvements.
Future work could include deriving statistical bounds for the performance of the norms, and situating the norms in the framework of other structured sparsity norms which have recently been studied.

\appendix
\section{Appendix}

In this appendix, we provide proofs of the results stated in the main body of the paper, and we include experimental details and results that were not included in the paper for space reasons.

\subsection{Proof of Proposition \ref{prop:k-sup-p-norm-and-dual}}
For every $u \in \R^d$ we have
\begin{align}
\Vert u \Vert_{(k,p),*} 
&= \max \left\{ \sum_{i=1}^d u_i w_i  : w \in C_k^p \right\} \nonumber \\
& = \max \left\{ \sum_{i=1}^d u_i w_i :  \card(w)\leq k, \Vert w \Vert_{p} \leq 1\right\} \nonumber \\
& = \max \left\{ \sum_{i\in \Ik} u_i w_i : \sum_{i \in \Ik} \vert w_i\vert^{p}  \leq 1 \right\}\nonumber  \nonumber \\
& = \left( \sum_{i \in \Ik} \vert u_i\vert^{q} \right)^{\frac{1}{q}},
\nonumber
\end{align}
where the first equality uses the definition of the unit ball \eqref{eq:unitball} and the second equality is true because the maximum of a linear functional on a compact set is attained at an extreme point of the set. The third equality follows by using the cardinality constraint, that is we set $w_i = 0$ if $i \notin \Ik$. Finally, the last equality follows by H\"older's inequality in $\R^k$ 
\cite[Ch. 16 Sec. D.1]{Marshall1979}. 

The second claim is a direct consequence of the cardinality constraint and H\"older's inequality in $\R^k$.
\qed

To prove Proposition \ref{prop:unit-ball-of-spectral} we require the following auxiliary result.  Let $X$ be a finite dimensional vector space. Recall that a subset $C$ of $X$ is called {\em balanced} if $\alpha C \subseteq C$ whenever $\vert \alpha \vert \leq 1$. 
Furthermore, $C$ is called {\em absorbing} if for any $x \in X$, $x \in \lambda C$ for some $\lambda >0$.  

\begin{lemma}\label{lem:minkowski-bounded}
Let $C\subseteq X$ be a bounded, convex, balanced, and absorbing set.  
The Minkowski functional $\mu_C$ of $C$, defined, for every $w \in X$, as
\begin{align*}
\mu_C(w) = \inf \left\{\lambda : \lambda >0, ~\frac{1}{\lambda} w \in C\right\}
\end{align*}
is a norm on $X$. 
\end{lemma}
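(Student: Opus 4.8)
The plan is to verify directly that $\mu_C$ satisfies the three defining properties of a norm: positive definiteness, positive homogeneity, and the triangle inequality, using each of the four hypotheses on $C$ (bounded, convex, balanced, absorbing) exactly where it is needed. First I would observe that $\mu_C$ is well defined and finite on all of $X$: since $C$ is absorbing, for every $w \in X$ there is $\lambda > 0$ with $w/\lambda \in C$, so the infimum is over a nonempty set, and since $\lambda > 0$ in the definition the value is nonnegative.

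Next I would treat positive homogeneity. For $t > 0$, the change of variable $\lambda \mapsto t\lambda$ in the defining infimum gives $\mu_C(tw) = t\,\mu_C(w)$ immediately. To handle negative scalars (and thereby get full homogeneity $\mu_C(\alpha w) = |\alpha|\,\mu_C(w)$), I would invoke the \emph{balanced} hypothesis: since $-1$ has absolute value $1$, $w/\lambda \in C$ implies $-w/\lambda \in C$, so the admissible sets of $\lambda$ for $w$ and for $-w$ coincide, whence $\mu_C(-w) = \mu_C(w)$. Combining the two observations yields $\mu_C(\alpha w) = |\alpha|\,\mu_C(w)$ for all scalars $\alpha$.

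For the triangle inequality I would use \emph{convexity}. Given $w, w' \in X$ and any $\lambda > \mu_C(w)$, $\lambda' > \mu_C(w')$, we have $w/\lambda \in C$ and $w'/\lambda' \in C$ (here one should note that the infimum defining $\mu_C$ is actually attained, or argue with $\lambda$ slightly larger; the set $\{\lambda > 0 : w/\lambda \in C\}$ is an interval unbounded above because $C$ is balanced, so membership for some $\lambda_0$ gives membership for all $\lambda \geq \lambda_0$). Writing the convex combination
\begin{align*}
\frac{w + w'}{\lambda + \lambda'} = \frac{\lambda}{\lambda + \lambda'}\cdot\frac{w}{\lambda} + \frac{\lambda'}{\lambda + \lambda'}\cdot\frac{w'}{\lambda'} \in C,
\end{align*}
convexity shows $(w+w')/(\lambda+\lambda') \in C$, hence $\mu_C(w+w') \leq \lambda + \lambda'$; taking the infimum over $\lambda, \lambda'$ gives $\mu_C(w+w') \leq \mu_C(w) + \mu_C(w')$.

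Finally, positive definiteness: $\mu_C(0) = 0$ is clear, and the nontrivial direction uses \emph{boundedness}. Suppose $\mu_C(w) = 0$ with $w \neq 0$; then there is a sequence $\lambda_n \downarrow 0$ with $w/\lambda_n \in C$, so $C$ contains vectors $w/\lambda_n$ of norm $\|w\|/\lambda_n \to \infty$, contradicting boundedness of $C$. Hence $\mu_C(w) = 0$ implies $w = 0$. The main obstacle, such as it is, is the bookkeeping around whether the defining infimum is attained — I would dispatch this at the outset by proving the ``ray'' property $\{\lambda > 0 : w/\lambda \in C\} = [\mu_C(w), \infty)$ (or $(\mu_C(w),\infty)$) from the balanced and convex hypotheses, after which the homogeneity and triangle-inequality arguments go through cleanly; the rest is routine.
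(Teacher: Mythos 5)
Your proposal is correct and follows essentially the same route as the paper's proof: a direct verification of the norm axioms, using boundedness for positive definiteness, balancedness for homogeneity under sign changes, convexity for the triangle inequality via the same convex combination $\frac{w+w'}{\lambda+\lambda'} = \frac{\lambda}{\lambda+\lambda'}\frac{w}{\lambda} + \frac{\lambda'}{\lambda+\lambda'}\frac{w'}{\lambda'}$, and absorbingness for finiteness. Your explicit treatment of the ``ray'' property of the admissible set of $\lambda$ is a small refinement of a point the paper passes over silently, but it does not change the argument.
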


\begin{proof}
We give a direct proof that $\mu_C$ satisfies the properties of a norm.  
See also e.g. \cite[\S 1.35]{Rudin1991} for further details.
Clearly $\mu_C(w) \geq 0$ for all $w$, and $\mu_C(0)=0$. Moreover, as $C$ is bounded,
$\mu_C(w) > 0$ whenever $w\ne 0$. 

Next we show that $\mu_C$ is one-homogeneous. For every $\alpha \in \R$, $\alpha \ne 0$, let $\sigma = {\rm sign}(\alpha)$ and note that
\begin{align*}
\mu_C(\alpha w) &= \inf \left\{\lambda>0 : \frac{1}{\lambda} \alpha w \in C \right\}\\
&= \inf \left\{\lambda>0 : \frac{|\alpha|}{\lambda}  \sigma w \in C \right\}\\
&= \vert \alpha \vert \inf \left\{\lambda>0 : \frac{1}{\lambda} 	w \in \sigma C \right\}\\
&= \vert \alpha \vert \inf \left\{\lambda>0 : \frac{1}{\lambda} 	w \in  C \right\}\\
&= \vert \alpha \vert \mu_C(w),
\end{align*}
where we have made a change of variable and used the fact that $\sigma C = C$.

Finally, we prove the triangle inequality. For every $v,w \in X$, if $v/\lam \in C$ and $w/\mu \in C$ then setting $\gamma =  \lam/(\lam+\mu)$, we have
$$
\frac {v+w}{\lam +\mu} =\gamma \frac{v}{\lam} + (1-\gamma)  \frac{w}{\mu}
$$
and since $C$ is convex, then $\frac {v+w}{\lam +\mu} \in C$. We conclude that $\mu_C(v+w) \leq \mu_C(v) + \mu_C(w)$. The proof is completed. 
\end{proof}
Note that for such set $C$, the unit ball of the induced norm $\mu_C$ is $C$. 
Furthermore, if $\|\cdot\|$ is a norm then its unit ball satisfies the hypotheses of Lemma \ref{lem:minkowski-bounded}.



\subsection{Proof of Proposition \ref{prop:unit-ball-of-spectral}}

Define the set
\begin{align*}
T_k^p &= \{W \in \mathbb{R}^{d\times m} : \textrm{rank}(W) \leq k, \Vert \sigma(W) \Vert_p \leq 1 \} , 
\end{align*}
and its convex hull $A_k^p = \textrm{co}(T_k^p)$, and consider the Minkowski functional 
\begin{align}
\lambda(W) 
&= \inf \{ \lambda>0: W \in \lambda A_k^p \}, ~~~ W\in \mathbb{R}^{d \times m}   \label{eqn:k-sup-matrix-def-2}.
\end{align}
We show that $A_k^p$ is absorbing, bounded, convex and symmetric, and it follows by Lemma \ref{lem:minkowski-bounded} that $\lambda$ defines a norm on $\R^{d \times m}$ with unit ball equal to $A_k^p$.
The set $A_k^p$ is clearly bounded, convex and symmetric.  
To see that it is absorbing, let $W$ in $\mathbb{R}^{d \times m}$ have singular value decomposition $U \Sigma V\trans$, and let $r=\min(d,m)$. 
If $W$ is zero then clearly $W\in A_k^p$, so assume it is non zero. 

{For $i \in \N_r$ let $S_i\in\R^{d\times m}$ have entry $(i,i)$ equal to $1$, and all remaining entries zero.
We then have
\begin{align*}
W&=  U \Sigma V\trans \\
&= U \left(\sum_{i=1}^r \sigma_i S_i  \right) V\trans \\
&= \left(\sum_{i=1}^d \sigma_i \right)   \sum_{i=1}^r \frac{\sigma_i}{\sum_{j=1}^r \sigma_j}(U S_i V \trans) \\
& =: \lambda \sum_{i=1}^r \lambda_i Z_i.
\end{align*}
Now for each $i$, $\Vert\sigma( Z_i) \Vert_p = \Vert \sigma(S_i) \Vert_p = 1$, and $\textrm{rank}(Z_i) = \textrm{rank}(S_i) = 1$, so $Z_i \in T_k^p$ for any $k \geq 1$. 
Furthermore $\lambda_i \in [0,1]$ and $\sum_{i=1}^r \lambda_i=1$, that is $(\lambda_1, \ldots, \lambda_r)$ lies in the unit simplex in $\R^d$,
so $\frac{1}{\lambda}W$ is a convex combination of elements of $Z_i$, in other words $W \in \lambda A_k^p$, and we have shown that $A_k^p$ is absorbing.
It follows that $A_k^p$ satisfies the hypotheses of Lemma \ref{lem:minkowski-bounded}, and $\lambda$ defines a norm on $\R^{d \times m}$ with unit ball equal to $A_k^p$.  }

Since the constraints in $T_k^p$ involve spectral functions, the sets $T_k^p$ and $A_k^p$ are invariant to left and right multiplication by orthogonal matrices. 
It follows that $\lambda$ is a spectral function, that is $\lambda(W)$ is defined in terms of the singular values of $W$.
By von Neumann's Theorem \cite{VonNeumann1937} the norm it defines is orthogonally invariant and we have
\begin{align}
\lambda(W) &= \inf \{ \lambda>0: W \in \lambda A_k^p \} \notag \\
&= \inf \{ \lambda >0: \sigma(W) \in \lambda C_k^p \}  \notag \\
&= \Vert \sigma(W) \Vert_{(k)} \notag
\end{align}
where we have used Equation \eqref{eq:unitball}, which states that $C_k^p$ is the unit ball of the $(k,p)$-support norm. 
It follows that the norm defined by \eqref{eqn:k-sup-matrix-def-2} is the spectral $(k,p)$-support norm with unit ball given by $A_k^p$.


\qed

\subsection{Proof of Proposition \ref{prop:projection-to-k-infinity-unit-ball}}

We solve the optimization problem
\begin{align}
\argmin_{x \in \R^d}  \left\{   \sum_{i=1}^d (x_i-w_i)^2: \vert x_i \vert \leq 1, \sum_{i=1}^d \vert x_i \vert \leq k     \right\}.\label{eqn:projection-problem}
\end{align}
We consider two cases.  If $\sum_{i=1}^d \vert w_i \vert \leq k$, then the problem decouples and we solve it componentwise.  
Specifically we minimize $(x_i-w_i)^2$ subject to $ \vert x_i \vert \leq 1$, and the solution is immediately given by
\begin{align}
x_i = 
\begin{cases}
-1, \quad &\text{if } w_i  < -1,\\
w_i, \quad &\text{if } -1 \leq w_i \leq 1,\\
1, \quad &\text{if }   w_i  >1.
\end{cases}\label{eqn:pinf-1}
\end{align}
We now assume that  $\sum_{i=1}^d \vert w_i \vert > k$.  
Consider the Lagrangian function $\mathcal{L}(x,\beta) = \sum_{i=1}^d (x_i-w_i)^2 +2 \beta \left( \sum_{i=1}^d \vert x_i \vert - k\right)$ with nonnegative multiplier $\beta$.  
We solve problem \eqref{eqn:projection-problem} by minimizing the Lagrangian with respect to $x$, which can be done componentwise due to the coupling effect of the Lagrangian.  
Furthermore, at the optimum the constraint $\sum_{i=1}^d  \vert x_i \vert \leq  k$ will be tight. 
The derivative with respect to $x_i$ is zero when $x_i = w_i - \beta \, \text{sign}(x_i)$. 
Incorporating the constraint $\vert x_i \vert \leq 1$ we get the following solution
\begin{align}
x_i = 
\begin{cases}
-1, 			\quad &\text{if } w_i + \beta <  -1,\\
w_i +\beta,	\quad &\text{if } -1 \leq w_i + \beta \leq 0,\\
w_i -\beta,	\quad &\text{if }  0 \leq w_i - \beta  \leq 1,\\
1, 			\quad &\text{if }  w_i - \beta  > 1,
\end{cases}\label{eqn:pinf-2}
\end{align}
where $\beta \geq 0$ is chosen such that $\sum_{i=1}^d \vert x_i(\beta) \vert = k$.
Note that for $\beta=0$, which corresponds to $\Vert w\Vert_1 \leq k$, \eqref{eqn:pinf-2} reduces to \eqref{eqn:pinf-1}, hence we obtain the compact notation \eqref{eqn:pinf-prox}.
Finally, note that the expression $\sum_{i=1}^d \vert x_i(\beta) \vert$ decreases monotonically as $\beta$ increases.  
In the case that $\Vert w \Vert_1 > k$, $\beta \in (0, \vert w_j \vert -1) $, where $\vert w_j \vert = \argmin \vert w_i \vert$, hence we can determine $\beta$ by binary search in $\mathcal{O}(d \log d)$ time. 

\qed

In order to project onto the unit ball of radius $\alpha>0$, we solve the optimization problem 
$\min \{   \sum_{i=1}^d (x_i-w_i)^2~:~x \in \R^d,~\vert  x_i \vert  \leq \alpha, ~\sum_{i=1}^d \vert x_i \vert \leq \alpha k  \}$. 
To do so, we make the change of variables $x_i' = x_i/\alpha$ and note that the problem reduces to computing the projection $x'$ of $w'$ onto the unit ball of the norm, where $w_i' = w_i/\alpha$, which is the problem 
that was solved in \eqref{eqn:projection-problem-body}.
Once this is done, our solution is given by $x_i=\alpha x_i'(\beta)$, where $x'(\beta)$ is determined in accordance with Proposition \ref{prop:projection-to-k-infinity-unit-ball}.

\subsection{Numerical Experiments}
In this section we report further experimental details and results not included in the main body of the paper for space reasons.

{\bf Simulated Datasets.}
We replicated the setting of \cite{McDonald2014a} in order to verify that the additional parameter can improve performance. 
Each $100 \times 100$ matrix is generated as $W=U V\trans +E$, where $U,V \in \mathbb{R}^{100\times r}$, $r \ll 100$, and the entries of $U$, $V$ and $E$ are i.i.d. standard Gaussian. 
Table \ref{table:synthetic-mc} illustrates the results. 
The error is measured as $\Vert \textrm{true} -  \textrm{predicted}\Vert^2 / \Vert \textrm{true} \Vert^2$, standard deviations are shown in brackets and the mean values of $k$ and $p$ are selected by validation.  
We note that the spectral $(k,p)$-support norm outperforms the standard spectral $k$-support norm in each regime, and the improvement is statistically significant at a level $<0.01$.

\begin{table}[t]
\caption{Matrix completion on synthetic datasets generated with decaying spectrum. The improvement of the $(k,p)$-support norm over the $k$-support and trace norms is statistically significant at a level $<0.001$. }
\vskip 0.1in
\label{table:synthetic-mc}
\centering
\setlength\tabcolsep{4pt}
\begin{minipage}[th]{0.98\linewidth}
\centering
\begin{small}
\begin{tabular}{llccc}
\toprule
   dataset & norm  & test error                           & $k$ & $p$  \\
\midrule
rank 5       & trace & 0.8184 (0.03)                         & -   &-  \\ 
$\rho$=10\%  & k-supp  & 0.8036 (0.03)                         &  3.6  & - \\ 
             &  kp-supp  & 0.7831 (0.03)                          & 1.8 & 7.3 \\ 
\midrule
rank 5       & trace  & 0.4085 (0.03)                         &  -  & - \\ 
$\rho$=20\%  & k-supp  & 0.4031 (0.03)                          &  3.1  & - \\ 
             & kp-supp  & 0.3996 (0.03)                          & 2.0 & 4.7 \\ 
\midrule
rank 10      & trace  & 0.6356 (0.03)             	            &-  & -   \\ 
$\rho$=20\%  & k-supp  & 0.6284 (0.03)            	            & 4.4 & -   \\ 
             & kp-supp  & 0.6270 (0.03)            	            & 2.0 & 4.4  \\ 
\bottomrule
\end{tabular}
\end{small}
\end{minipage}
\end{table}

{\bf Real Datasets.} 
The MovieLens 100k dataset ({\em http://grouplens.org/datasets/movielens/}) consists of 943 user ratings of 1,682 movies, the ratings are integers from 1 to 5, and all users have rated a minimum number of 20 films. 
The Jester 1 dataset ({\em http://goldberg.berkeley.edu/jester-data/}) consists of ratings of 24,983 users of 100 jokes, and the Jester 3 dataset consists of ratings of 34,938 users of 100 jokes, and the ratings are real values from $-10$ to $10$.

Following \cite{McDonald2014a,Toh2011}, for MovieLens for each user we uniformly sampled $\rho=50\%$ of available entries for training, and for Jester 1 and Jester 3 we sampled 20, respectively 8 ratings per user, using 10\% for validation.
We used normalized mean absolute error, 
$$
\textrm{NMAE}=\frac{\Vert \text{true}-\text{predicted}\Vert^2}{\#\text{obs.}/(r_{\max} -r_{\min})},
$$
where $r_{\min}$ and $r_{\max}$ are lower and upper bounds for the ratings \cite{Toh2011}, 
and we implemented a final thresholding step as in \cite{McDonald2014a}.


\begin{thebibliography}{28}
\providecommand{\natexlab}[1]{#1}
\providecommand{\url}[1]{\texttt{#1}}
\expandafter\ifx\csname urlstyle\endcsname\relax
  \providecommand{\doi}[1]{doi: #1}\else
  \providecommand{\doi}{doi: \begingroup \urlstyle{rm}\Url}\fi

\bibitem[Argyriou et~al. 2007]{Argyriou2007}
A.~Argyriou, C.~Micchelli, M.~Pontil, and Y.~Ying.
\newblock A spectral regularization framework for multi-task structure
  learning.
\newblock \emph{NIPS}, 2007.

\bibitem[Argyriou et~al. 2008]{Argyriou2008}
A.~Argyriou, T.~Evgeniou, and M.~Pontil.
\newblock Convex multi-task feature learning.
\newblock \emph{Machine Learning}, 73(3):\penalty0 243--272, 2008.

\bibitem[Argyriou et~al. 2011]{Argyriou2011}
A.~Argyriou, C.~A. Micchelli, M.~Pontil, L.~Shen, and Y.~Xu.
\newblock Efficient first order methods for linear composite regularizers.
\newblock \emph{CoRR}, abs/1104.1436, 2011.

\bibitem[Argyriou et~al. 2012]{Argyriou2012}
A.~Argyriou, R.~Foygel, and N.~Srebro.
\newblock Sparse prediction with the k-support norm.
\newblock \emph{Advances in Neural Information Processing Systems 25}, pages
  1466--1474, 2012.

\bibitem[Beck and Teboulle 2009]{Beck2009}
A.~Beck and M.~Teboulle.
\newblock A fast iterative shrinkage-thresholding algorithm for linear inverse
  problems.
\newblock \emph{SIAM J. Imaging Sciences}, 2(1):\penalty0 183--202, 2009.

\bibitem[Bhatia 1997]{Bhatia1997}
R.~Bhatia.
\newblock \emph{Matrix Analysis}.
\newblock Springer, 1997.

\bibitem[Buehlmann and van~der Geer 2011]{Buehlmann2011}
P.~Buehlmann and S.~A. van~der Geer.
\newblock \emph{Statistics for High-Dimensional Data}.
\newblock Springer, 2011.

\bibitem[Chatterjee et~al. 2014]{Chatterjee2014}
S.~Chatterjee, S.~Chen, and A.~Banerjee.
\newblock Generalized {D}antzig selector: application to the k-support norm.
\newblock In \emph{Advances in Neural Information Processing Systems 28}, 2014.

\bibitem[Frank and Wolfe 1956]{Frank1956}
M.~Frank and P.~Wolfe.
\newblock An algorithm for quadratic programming.
\newblock \emph{Naval Research Logistics Quarterly}, 3 (1-2):\penalty0 95--110,
  1956.

\bibitem[Horn and Johnson 1991]{Horn1991}
R.~A. Horn and C.~R. Johnson.
\newblock \emph{Topics in Matrix Analysis}.
\newblock Cambridge University Press, 1991.

\bibitem[Ivanov et~al. 1978]{Ivanov1978}
V.K. Ivanov, V.~V. Vasin, and V.P. Tanana.
\newblock \emph{Theory of Linear Ill-Posed Problems and its Applications}.
\newblock De Gruyter, 1978.

\bibitem[Jacob et~al. 2009]{Jacob2009-GL}
L.~Jacob, G.~Obozinski, and J.-P. Vert.
\newblock Group lasso with overlap and graph lasso.
\newblock \emph{Proceedings of the 26th International Conference on Machine
  Learning}, 2009.

\bibitem[Jaggi 2013]{Jaggi2013}
M.~Jaggi.
\newblock Revisiting {F}rank-{W}olfe: Projection-free sparse convex
  optimization.
\newblock \emph{Proceedings of the 30th International Conference on Machine
  Learning}, 2013.

\bibitem[Jaggi and Sulovsky 2010]{Jaggi2010}
M~Jaggi and M.~Sulovsky.
\newblock A simple algorithm for nuclear norm regularized problems.
\newblock \emph{Proceedings of the 27th International Conference on Machine
  Learning}, 2010.

\bibitem[Lewis 1995]{Lewis1995}
A.~S. Lewis.
\newblock The convex analysis of unitarily invariant matrix functions.
\newblock \emph{Journal of Convex Analysis}, 2:\penalty0 173--183, 1995.

\bibitem[Marshall and Olkin 1979]{Marshall1979}
A.~W. Marshall and I.~Olkin.
\newblock \emph{Inequalities: Theory of Majorization and its Applications}.
\newblock Academic Press, 1979.

\bibitem[Mazumder et~al. 2010]{Mazumder2010}
R.~Mazumder, T.~Hastie, and R.~Tibshirani.
\newblock Spectral regularization algorithms for learning large incomplete
  matrices.
\newblock \emph{Journal of Machine Learning Research}, 11:\penalty0 2287--2322,
  2010.

\bibitem[McDonald et~al. 2016]{McDonald2014}
A.~M. McDonald, M.~Pontil, and D.~Stamos.
\newblock New perspectives on k-support and cluster norms.
\newblock \emph{Journal of Machine Learning Research}, 2016{\natexlab{a}}.

\bibitem[McDonald et~al. 2014]{McDonald2014a}
A.~M. McDonald, M.~Pontil, and D.~Stamos.
\newblock Spectral k-support regularization.
\newblock In \emph{Advances in Neural Information Processing Systems 28},
  2014{\natexlab{b}}.

\bibitem[Nesterov 2007]{Nesterov2007}
Y.~Nesterov.
\newblock Gradient methods for minimizing composite objective function.
\newblock \emph{Center for Operations Research and Econometrics}, 76, 2007.

\bibitem[Rockafellar 1970]{Rockafellar1970}
R.~T. Rockafellar.
\newblock \emph{Convex Analysis}.
\newblock Princeton University Press, 1970.

\bibitem[Rohde and Tsybakov 2011]{Rohde2011}
A.~Rohde and A.B. Tsybakov.
\newblock Estimation of high-dimensional low rank matrices.
\newblock \emph{Annals of Statistics}, 39:\penalty0 887--930, 2011.

\bibitem[Rudin 1991]{Rudin1991}
W.~Rudin.
\newblock \emph{Functional Analysis}.
\newblock McGraw Hill, 1991.

\bibitem[Tibshirani 1996]{Tibshirani1996}
R.~Tibshirani.
\newblock Regression shrinkage and selection via the lasso.
\newblock \emph{Journal of the Royal Statistical Society}, 58:\penalty0
  267--288, 1996.

\bibitem[Toh and Yun 2011]{Toh2011}
K.-C. Toh and S.~Yun.
\newblock An accelerated proximal gradient algorithm for nuclear norm
  regularized least squares problems.
\newblock \emph{SIAM J. on Img. Sci.}, 4:\penalty0 573--596, 2011.

\bibitem[Von~Neumann 1937]{VonNeumann1937}
J.~Von~Neumann.
\newblock \emph{Some matrix-inequalities and metrization of matric-space}.
\newblock Tomsk. Univ. Rev. Vol {I}, 1937.

\bibitem[Wainwright 2014]{Wainwright2014}
M.~Wainwright.
\newblock Structured regularizers for high-dimensional problems.
\newblock \emph{Annual Review of Statistics and Its Application}, 1:\penalty0
  233--253, 2014.

\bibitem[Zou and Hastie 2005]{Zou2005}
H.~Zou and T.~Hastie.
\newblock Regularization and variable selection via the elastic net.
\newblock \emph{Journal of the Royal Statistical Society, Series B},
  67(2):\penalty0 301--320, 2005.

\end{thebibliography}
\end{document}